\newcommand{\LL}{\left}
\newcommand{\RR}{\right}
\newcommand{\E}{\mathbb E}
\newcommand{\Z}{\mathbb Z}
\newcommand{\R}{\mathbb R}
\newcommand{\mA}{\mathcal A}
\newcommand{\mD}{\mathcal D}
\newcommand{\mH}{\mathcal H}
\newcommand{\mB}{\mathcal T}
\newcommand{\mS}{\mathcal S}
\newcommand{\mO}{\mathcal O}
\newcommand{\mJ}{\mathcal J}
\newcommand{\mM}{\mathcal M}
\newcommand{\mX}{\mathcal X}
\newcommand{\ma}{\textbf{a}}
\newcommand{\ms}{\textbf{s}}
\newcommand{\fJ}{\mathfrak J}
\DeclareMathOperator*{\argmax}{argmax}
\DeclareMathOperator*{\argmin}{argmin}
\newcommand{\var}{\mathrm{Var}}
\newcommand{\cov}{\mathrm{Cov}}
\newcommand{\TV}{D_{\mathrm{TV}}}
\newcommand{\CQL}{D_{\mathrm{CQL}}}
\newcommand{\loce}{\ell_1\text{-CE}}
\newcommand{\chis}[2]{\chi^2(#1\,\|#2)}
\newcommand{\COne}{{\sqrt{\var_{\pi^f}(Q(\ms,\cdot))|\mD(\ms)|}}}
\newcommand{\CTwo}{{\sqrt{\var_{d_f}\hat Q^k}}}
\newcommand{\CThr}{{\sqrt{\var_{d_f}(\mat F \omega^k)}}}
\newcommand{\dsq}{\partial_\psi Q|_{\ms,\ma}}
\newcommand{\mat}[1]{\mathbf{#1}}
\newcommand{\gQ}{\nabla_\psi \hat Q^k}
\newcommand{\lstd}{\textnormal{LSTD}}
\newcommand{\minimize}{\textnormal{minimize}}
\newcommand{\rob}{\textnormal{rob}}
\theoremstyle{definition}
\newtheorem{theorem}{Theorem}[section]
\newtheorem{lemma}[theorem]{Lemma}
\newtheorem{corollary}[theorem]{Corollary}
\newtheorem{assumption}[theorem]{Assumption}
\newtheorem{remark}[theorem]{Remark}
\providecommand{\keywords}[1]
{
  \small	
  \textbf{\textit{Keywords---}} #1
}
\title{DROMO: \\Distributionally Robust Offline Model-based Policy Optimization}
\date{}
\author{
Ruizhen Liu\\
Affiliated High School of South China Normal University\\
\texttt{liurz.jason2019@gdhfi.com}\\
\AND
Zhicong Chen \\
Affiliated High School of South China Normal University\\
\texttt{chenzc.marin2018@gdhfi.com}\\
\AND
Dazhi Zhong \\
Affiliated High School of South China Normal University\\
\texttt{zhongdz.dazhi2018@gdhfi.com}
}
\begin{document}

\maketitle

\begin{abstract}
	We consider the problem of offline reinforcement learning with model-based control, whose goal is to learn a dynamics model from the experience replay and obtain a pessimism-oriented agent under the learned model. Current model-based constraint includes explicit uncertainty penalty and implicit conservative regularization that pushes Q-values of out-of-distribution state-action pairs down and the in-distribution up. While the uncertainty estimation, on which the former relies on, can be loosely calibrated for complex dynamics, the latter performs slightly better. To extend the basic idea of regularization without uncertainty quantification, we propose  \textit{\textbf{d}istributionally \textbf{r}obust \textbf{o}ffline \textbf{m}odel-based policy \textbf{o}ptimization} (DROMO), which leverages the ideas in distributionally robust optimization to penalize a broader range of out-of-distribution state-action pairs beyond the standard empirical out-of-distribution Q-value minimization. We theoretically show that our method optimizes a lower bound on the ground-truth policy evaluation, and it can be incorporated into any existing policy gradient algorithms. We also analyze the theoretical properties of DROMO's linear and non-linear instantiations.

	\newline
	\keywords{offline learning, model-based reinforcement learning, pessimism, distributionally robust optimization}
\end{abstract}

\newpage
\tableofcontents

\newpage
\section{Introduction}\label{sec:intro}

The field of reinforcement learning (RL) (\cite{sutton1999reinforcement}) focuses on looking for the best sequential planning under a given environment. Specifically, under a Markov Decision Process (MDP) (\cite{PUTERMAN1990331}), the planner, commonly known as the agent, performs some actions under given states of the environment, and, by doing so, triggers the environment to spill out new states and some rewards. The goal of the agent is to accumulate as many rewards as possible. Unlike the other fields of machine learning, in which models can be trained under large, realistic, and balanced datasets and generalize well, reinforcement learning algorithms fail to enjoy the benefit of balanced datasets, and instead require costly training-time online trial-and-error to achieve good generalization results on real-world applications, which range from autopilot (\cite{shalev-shwartzSafeMultiAgentReinforcement2016}, \cite{sun2020scalability}, \cite{yuBDD100KDiverseDriving2020}), to recommendation systems (\cite{li2010contextual}), and to precision medicine (\cite{chakraborty2014dynamic}, \cite{gottesman2019guidelines}). One way to reduce the cost is to the policy under an offline (batch) regime.

Although online RL is well-understood (\cite{lattimore2020bandit}, \cite{agarwal2019reinforcement}), offline RL remains less so. Vanilla function approximation can be problematic when applied to MDPs with large and continuous state and action spaces. Granted, off-policy RL algorithms, such as DDPG (\cite{lillicrapContinuousControlDeep2019}), TRPO (\cite{schulmanTrustRegionPolicy2015}), and SAC (\cite{haarnojaSoftActorCriticOffPolicy2018}), can utilize an experience replay. However, they perform poorly without online data collection, and, even with online data collection, the function approximation is nonetheless sensitive to covariate distribution shift (\cite{vanhasseltDeepReinforcementLearning2018} \cite{fu2019diagnosing}). Furthermore, \cite{fujimotoOffPolicyDeepReinforcement2019} and \cite{fu2019diagnosing} empirically shows that DDPG fails both in theory and in practice. The most likely reason behind this is that the dataset's insufficient coverage of the environment induces covariate distributional shift, which, in turn, causes erroneous function approximation for states (value function) or state-action pairs (Q-function) (\cite{sutton1995virtues}, \cite{vanhasseltDeepReinforcementLearning2018}). Recent attempts of tackling the issue can be partitioned into two possibly overlapping categories: (i) bootstrap aggregation via an ensemble of target Q-networks to stabilize the action value approximations (\cite{agarwalOptimisticPerspectiveOffline2020}), (ii) regularizing the Q-function via policy constraint (\cite{fujimotoOffPolicyDeepReinforcement2019}, \cite{kumarStabilizingOffPolicyQLearning2019}, \cite{wuBehaviorRegularizedOffline2019}, \cite{siegelKeepDoingWhat2020}), and (iii) injecting pessimism via epistemic uncertainty quantification (\cite{luoLearningSelfCorrectablePolicies2019}, \cite{yuMOPOModelbasedOffline2020}, \cite{kumarConservativeQLearningOffline2020}, \cite{yuCOMBOConservativeOffline2021}). (ii) relies only on the dataset and restricts the learned policy not to visit states and perform actions scarcely covered by the dataset, which often induces an overly conservative actor. On the other hand, (iii) learns a dynamics model and subtracts a penalty off the Q-function or value function for states and actions. Empirically, model-based algorithms, able to enjoy a richer dataset, have shown better generalization capability (\cite{yuMOPOModelbasedOffline2020}, \cite{kidambiMOReLModelBasedOffline2021a}, \cite{yuCOMBOConservativeOffline2021}). \cite{yuCOMBOConservativeOffline2021} empirically demonstrates the hardness of performing uncertainty quantification under complex dynamics networks and environments, and proposes COMBO, a model-based counterpart of CQL that demands no explicit regularization. Specifically, COMBO employs an actor-critic scheme where the action value function is learned using both the offline dataset as well as synthetic rollouts (\cite{yuCOMBOConservativeOffline2021}).

Despite preliminary success of model-based control, we expect plenty of headroom to improve. The standard COMBO only minimizes the expectation of value function under the rollout-induced distribution but makes no guarantees on the state-action pairs that are covered by neither the dataset nor the rollouts. Although, during future iterations, the updated rollout policy may reach such states, restricting this earlier may result in a faster rate of convergence. Also, if the neighboring state-action pairs of those that are sampled from the rollout policy are not penalized, they are likely to be visited in testing time due to the randomness of the policy, which can be hazardous.

\begin{figure}[h]
	\centering
	\subfloat[]{{\includegraphics[width=0.5\textwidth]{./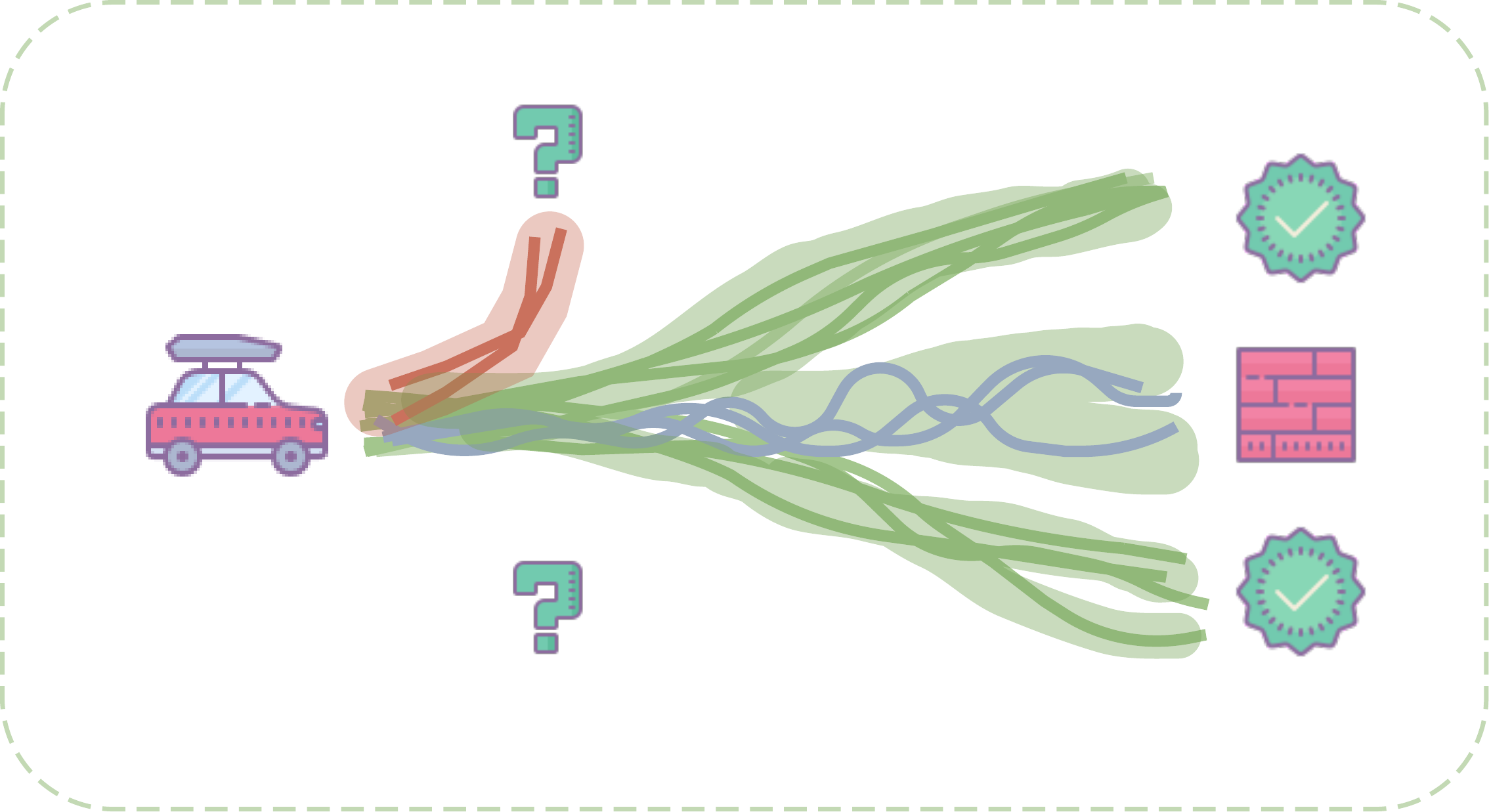}}}
	\qquad
	\subfloat[]{{\includegraphics[width=0.5\textwidth]{./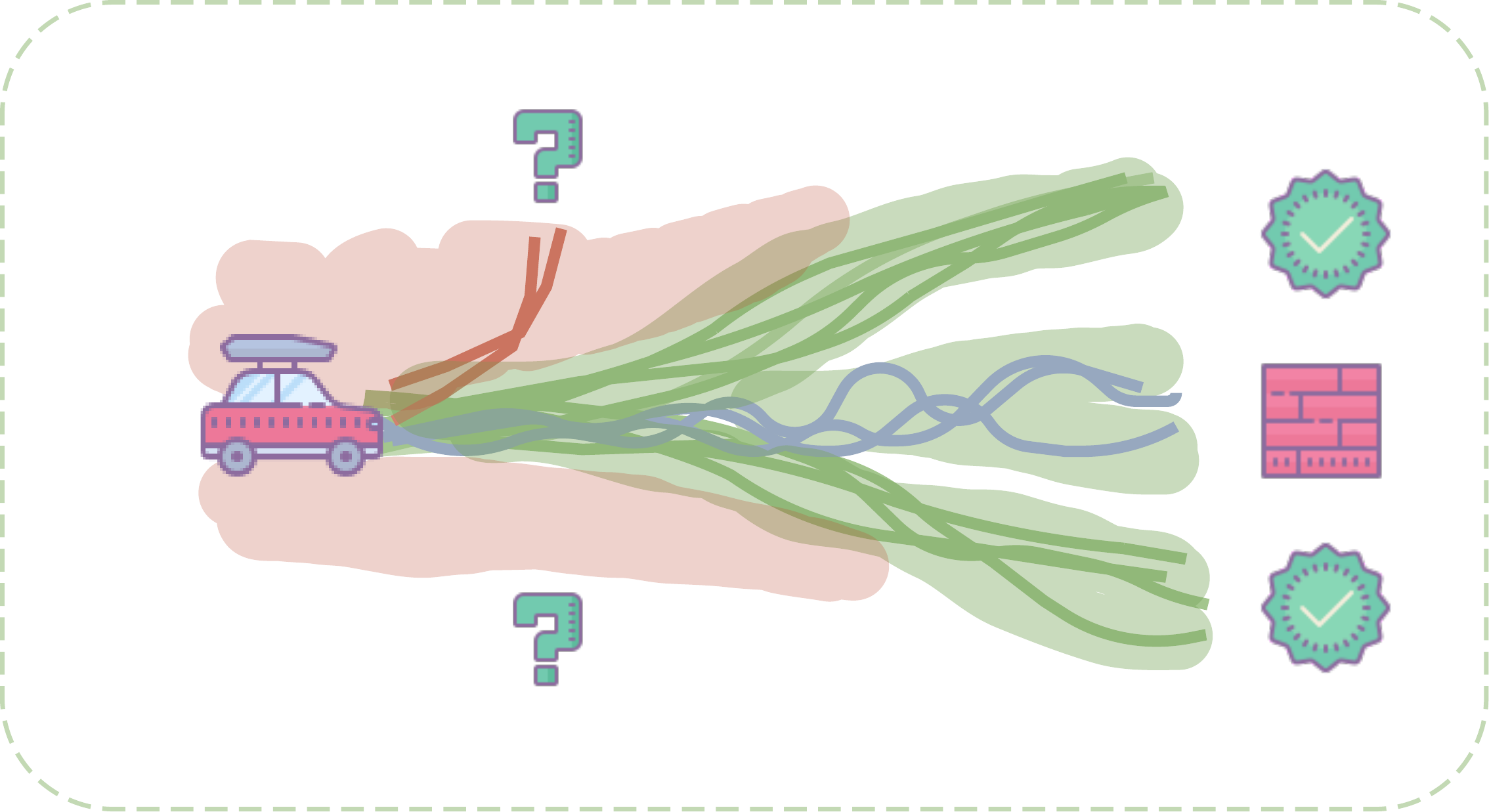} }}
	\caption{The penalty manifolds of (a) COMBO and (b) DROMO under a toy scenario. The green and blue lines represent the high-reward and low-reward states in the dataset support. The red lines represent the out-of-distribution state-action pairs carried out by the rollout agent. Note that for simplicity the in-distribution state-action pairs are absorded into the dataset suuport. Compared to COMBO, DROMO also generalizes the penalty to other out-of-distribution state-action pairs.
	}
	\label{combo-scene}
\end{figure}

The primary contribution of this work is a counterpart of COMBO that arises from theoretical results in distributionally robust optimization. Specifically, \cite{duchiVariancebasedRegularizationConvex2017} propose to add a variance regularization term on the loss function, which automatically balances bias and the variance. This is shown to improve out-of-sample testing-time performance (\cite{duchiVariancebasedRegularizationConvex2017}). In the sequel, we view adding the variance regularization as providing a convex surrogate of expectation value function under the empirical rollouts and allows us to penalize state-and-action beyond the coverage of both the rollouts and the dataset. Subsequently, this is illustrated through a toy example in Figure \ref{combo-scene}. In Section \ref{sec:dromo-lb}, we show that DROMO learns a lower bound of the expectation of Q-function obtained in the underlying MDP, and in Section \ref{sec:dromo-lin} and \ref{sec:dromo-nonlin}, we show that, under both linear and non-linear regimes, a simple and more conservative instantiation of DROMO holds similar guarantees.

\section{Preliminaries}\label{sec:prelim}
In this section, we review the standard fully-observed MDP setup, the problems that offline reinforcement learning tries to solve, and the model-based approaches.
\subsection{Episodic MDP and Policy Evaluation}

We consider episodic but infinite horizon, fully-observed Markov Decision Process, namely MDP (\cite{PUTERMAN1990331}). An MDP is a tuple of six elements: $\mM = (\mS,\mA,T,r,\mu,\gamma)$, where $\mS$ is the state space, $\mA$ is the action space, $T(\cdot|\ms,\ma): \mS \times \mA \to \mS$ is the transition probability density function conditioned on a given $(\ms,\ma)$-pair, $r(\ms,\ma): \mS\times\mA\to\R$ is the real-valued reward function for a given $(\ms,\ma)$-pair, $\mu(\ms)$ is the initial state distribution and $\gamma \in (0,1)$ is a discount factor . Define a policy $\pi(\ma|\ms): \mS \to \mA$ to be a probabilistic model conditioned on a given state $\ms$. Also, we denote the stationary distribution by playing a policy $\pi(a|s)$ under an MDP $\mM$ as $d^\pi_\mM(\bf s):= (1-\gamma)\sum^\infty_{t=0}\cal P{(\bf{s_t=s|\pi})}$. With an abuse of notation, we denote the visitation distribution of a given $(s,a)$-pair induced by a policy $\pi(a|s)$ running on MDP $\mM$ as $d^\pi_\mM(\ms,\ma):=d^\pi_\mM(s)\pi(\ma|\ms)$. The goal of reinforcement learning is to obtain a $\pi(a|s)$ that maximizes the expected cumulative discounted reward (i.e., return) over a target MDP $\mM$.

\begin{equation}
	\max_{\pi}\mJ(\mM,\pi):= \frac{1}{1-\gamma}\E_{\ms\sim d^\pi_\mM(s),\ \ma\sim\pi(\ma|\ms)}\left[r(\bf{\ms,\ma})\right]
	\label{eq:j-eval-def}
\end{equation}

For any policy $\pi(a|s)$ an MDP $\mM$ we define the state-value function $V^\pi_\mM$ as
\begin{equation}
	\label{eq:v-def}
	V^\pi(\ms):= \E_{s\sim d^\pi_\mM(s),\ a\sim\pi(a|s)}\left[\sum^\infty_{h=0} \gamma^h r{(\ms_h,\ma_h)|\ms_0 = \ms}\right]
\end{equation}
and the action-value function (Q-function) as
\begin{equation}
	\label{eq:q-def}
	Q^\pi(\ms,\ma):= \E_{s\sim d^\pi_\mM(s),\ a\sim\pi(a|s)}\left[\sum^\infty_{h=0} \gamma^h r{(\ms_h,\ma_h)|\ms_0 = \ms,\ma_0=\ma}\right].
\end{equation}

As a direct result of the definition in Equation \ref{eq:v-def} and \ref{eq:q-def}, we have the following equality.
\begin{equation}
	\begin{aligned}
		V^\pi(\ms) & = \sum_\ma \pi(\ma|\ms)Q(\ms,\ma)=\LL\langle Q(\ms,\cdot),\pi(\cdot|\ms)\RR\rangle_\mA\label{eq:v-bell} \\
	\end{aligned}
\end{equation}
where $\langle\,\cdot,\cdot\,\rangle_{\mA}$ is the inner product over the action space $\mA$.

Classic schema for solving MDPs include Q-learning and (\cite{sutton1999reinforcement}) and actor-critic (\cite{bertsekas1995neuro}). Q-learning approaches contract the Q-function by iteratively applying the Bellman optimality operator $\mB^* Q(\ms,\ma) :=r(\ms,\ma)+\gamma\E_{\ms'\sim T(\ms,\ma)}[\max_{\ma'}Q(\ms',\ma')]$, and then recover a greedy policy via an exact or approximate maximization scheme, such as CEM (\cite{kalashnikov2018scalable}).Actor-critic methods alternate update between two models: actor and critic. The critic updates iteratively contract the Q-function with Bellman expectation operator $\mB^\pi Q(\ms,\ma) :=r(\ms,\ma)+\gamma\E_{\ms'\sim T(\ms,\ma),\ma'\sim\pi(\ma'|\ms')}[Q(\ms',\ma')]$, and the actor updates the policy $\pi(\ma|\ms)$ to maximize the Q-estimate.
For the rest of the paper, we denote the policy class as $\Pi = \LL\{\pi_\theta: \theta \in \Theta\RR\}$, and the function class for Q-function as $\mathcal Q= \LL\{Q_\psi : \psi\in \Psi\RR\}$. However, under clear context, we simplify $Q_{\psi^k}$ as $Q^k$ and $Q^{\pi_{\theta^K}}_{\psi^K}$ as $Q^\pi$.

\subsection{Offline Data Collection}

We consider an offline(batch) settings, that is, the situation where the learner only has access to a dataset $\mD=\{(\ms,\ma,\ms',r)\}$, which are induced by an unknown behavior policy $\pi^\beta$ running on a ground-truth MDP $\widetilde \mM$. In short, the dataset is sampled from the $d^{\pi^\beta} =d^{\pi^\beta}(\ms)\pi^\beta(\ma|\ms)$. We also denote the empirical $\widetilde \mM$ to be the empirical MDP induced by the dataset $\mD$, which differs from the ground-truth MDP due to the heteroskedasticity and imbalancedness of the dataset and denote $d(\ms,\ma)$ to be the empirical surrogate of $d^{\pi^\beta}(\ms,\ma)$. The behavior can be approximated by maximum likelihood estimation $\pi^\beta\leftarrow\argmax_{\pi\in\Pi}\E_{s,a\sim\mD}\left[\pi(\ma|\ms)\right]$. Note that under offline setting, it is statistically hard to find an optimal policy over the ground-truth MDP (\cite{chenInformationTheoreticConsiderationsBatch2019} \cite{matsushimaDeploymentEfficientReinforcementLearning2020} \cite{dongProvableModelbasedNonlinear2021}). We assume that the distance between the dataset-induced MDP and the underlying MDP is bounded, which is a standard assumption in offline RL literature (\cite{kumarConservativeQLearningOffline2020}, \cite{yuCOMBOConservativeOffline2021}).

\begin{assumption}
	\label{ass:bound-q}
	\textit{Assume for all $\ms\in\mS\,,\ma\in\mA$, the following inequality holds with high probability ($\ge 1-\delta$)
		\begin{equation}
			|r(\ms,\ma)-r_{\widetilde \mM}(\ms,\ma)|\le \frac {C_{r,\delta}}{\sqrt{|\mD(\ms,\ma)|}}\,\ |T(\ms,\ma)-T_{\widetilde \mM}(\ms,\ma)|\le \frac{C_{T,\delta}}{\sqrt{|\mD(\ms,\ma)|}}
			\label{eq:bound-q}
		\end{equation}
		where $r(\cdot\,,\cdot)$ and $T(\cdot\,,\cdot)$ is the reward and dynamics of the underlying MDP $\mM$.
	}
\end{assumption}

Since the dataset $\mD$ does not cover all the $(\ms,\ma,\ms')$-transition pairs, the learner do not have access to the Bellman expectation operator $\mB^\pi$ of the underlying MDP, but a empirial surrogate $\hat \mB^\pi$ backed up by a single transition $(\ms,\ma,\ms)$, namely,
\begin{equation}
	\mB^\pi_{\widetilde \mM} Q(\ms,\ma) = r(\ms,\ma) + \gamma\E_{\ma'\sim\pi(\ma'|\ms')}[Q(\ms',\ma')]
	\label{eq:emp-bell}
\end{equation}
Following \cite{osband2016deep}, \cite{jaquesWayOffPolicyBatch2019}, and \cite{odonoghueVariationalBayesianReinforcement2019}, using the empirial Bellman operator to contract the Q-function is still viable and as shown in Appendix D.3 of \cite{kumarConservativeQLearningOffline2020}, the difference between is bounded: $|\mB^\pi Q(\ms,\ma)-\mB^\pi_{\widetilde \mM} Q(\ms,\ma)| \le \frac{C_{r,T,\delta}}{\sqrt{|\mD(\ms,\ma)|}}$.

\subsection{Model-based Reinforcement Learning}

Model-based methods learn an additional dynamics model and then use it to aid policy update. The dynamics model, denoted as $\hat T$, can also be trained under maximum likelihood estimate $\hat T = \argmin_{\phi \in \Phi}\E_{\ms,\ma,\ms'\sim\mD}[\log(T_\phi(\ms'|\ms,\ma))]$, where the function class for dynamics is given by $\mathfrak T: \LL\{\mathcal{N}(\mu_\phi(\ms,\ma),\Sigma_\phi(\ms,\ma)):\phi\in\Phi\RR\}$. If the reward function is unknown, we can also approximated the reward with the dynamics model $\hat T$ by concatenated onto the target state $\ms'$, and the bounds in Assumption \ref{ass:bound-q} still holds. We denote $\widehat \mM=\LL(\mS,\mA,\hat T, r_{\widehat \mM},\mu_0,\gamma\RR)$ as the MDP induced by the learned dynamics. This approach works orthogonal to all policy gradient algorithms, any of which can be used to recover a greedy policy on $\widehat \mM$. For any policy $\pi(a|s)$, we denote the occupancy measure of a certain $(s,a)$-pair induced by running $\pi(a|s)$ on the learned MDP $\widehat \mM$ as $\rho(s,a)=d^\pi_{\widehat \mM}(s)\pi(a|s)$. Here, we have access the empirical Bellman operator with respect to $\widehat \mM$,

\begin{equation}
	\mB^\pi_{\widehat \mM} Q(\ms,\ma) = r_{\widehat \mM}(\ms,\ma) + \gamma\E_{\ms'\sim\hat T(\ms,\ma)\ma'\sim\pi(\ma'|\ms')}[Q(\ms',\ma')]
	\label{eq:expectation-bell-hat}
\end{equation}

However, directly applying approximate dynamic programming algorithms over fails Equation \ref{eq:expectation-bell-hat} theoretically and practically, due to the distribution shift between the dataset $\mD$ and the learned MDP $\widehat \mM$ (\cite{ross2012agnostic}, \cite{kidambiMOReLModelBasedOffline2021a}). Methods for soothing this issue include MORel (\cite{kidambiMOReLModelBasedOffline2021a}), MOPO (\cite{yuMOPOModelbasedOffline2020}), and COMBO (\cite{yuCOMBOConservativeOffline2021}). MOReL and MOPO use uncertainty quantifier as a penalty term and optimize over the lower bound of the policy evaluation function. Uncertainty quantification algorithm like bootstrap ensembles (\cite{osband2018randomized}, \cite{azizzadenesheli2018efficient}, \cite{lowreyPlanOnlineLearn2019}), we can estimate uncertainty $u(\ms,\ma)$ in the output of dynamics model $\hat T$ given an $(\ms,\ma)$-pair. MOPO use this uncertainty measure to subtract the corresponding reward of all state-action pairs $\bar r(s,a) = \hat r(s,a)-\lambda u(s,a)$, and run a policy gradient algorithm on the MDP $\bar \mM=\LL(\mS,\mA,\hat T,\bar r,\mu,\gamma\RR)$ constructed upon the truncated reward. However, uncertainty estimation based on variance is not sensitive enough and does not capture the shape of the support of dataset \cite{yuCOMBOConservativeOffline2021}. COMBO adopts the framework of CQL to inject pessimism into the Q-function and policy search without explicitly penalizing uncertainty. Our work is largely based on COMBO, which alternates between the following actor and critic update:
\begin{itemize}
	\item \textbf{Critic update.}
	      The Q-function is iteratively updated by solving the following minimization problem
	      \begin{equation}
		      \begin{aligned}
			      \hat Q^{k+1}\leftarrow \argmin_Q \beta\LL(\E_{\ms,\ma\sim\rho(\ms,\ma)}[Q(\ms,\ma)]-\E_{\ms,\ma\sim \mD}[Q(\ms,\ma)]\RR) \\
			      +\frac 1 2 \E_{\ms,\ma,\ms'\sim d_f}\LL[\LL(Q(s,a)-\hat \mB^\pi Q(s,a)\RR)^2\RR]
		      \end{aligned}
		      \label{eq:combo-critic-update}
	      \end{equation}
	      where $\rho(\ms,\ma)=d^\pi_{\widetilde \mM}(\ms)\pi(\ma|\ms)$ is the state-action occupancy distribution induced by a policy $\pi$ and $\widehat \mM$, and $d_f(\ms,\ma)$ is the $f$-interpolate of the policy-induced and dataset-induced state-action occupancy distribution, defined as
	      \begin{equation}
		      d_f(\ms,\ma) = f\rho(s,a) + (1-f)d(s,a)
		      \label{eq:f-int}
	      \end{equation}
	      where $d(s,a)$ is the empirical version of dataset $d^{\pi^\beta}_{\widetilde \mM}(s)\pi^\beta(a|s)$. COMBO penalizes Q-function for $(\ms,\ma)$-pairs that are rare or unseen in the dataset $\mD$ (\cite{yuCOMBOConservativeOffline2021}). Unlike CQL, over-estimation do occur, but only for $(\ms,\ma)$-pairs that are sampled from the dataset.
	\item \textbf{Actor update.} After learning a conservative critic, $\hat Q^{k+1}$, we update the actor(policy) as
	      \begin{equation}
		      \pi^{k+1} \leftarrow\argmax_{\pi\in\Pi} \E_{\ms\sim\rho(s),\ms\sim\pi}\left[\hat Q^{k+1}(\ms,\ma)\right]
	      \end{equation}
	      where $\rho(s)$ is the stationary distribution of $\rho(s,a)$. In the actor-critic schema, both \texttt{argmax} and \texttt{argmin} can be approximated by sufficiently many steps of gradient descent.
\end{itemize}

\section{Distributionally Robust Offline Model-based Optimization}\label{dromo-alg}

In this section, we present an offline arctor-critic algorithm \textit{\textbf{d}istribution \textbf{r}obust \textbf{o}ffline \textbf{m}odel-based policy \textbf{o}ptimization} (DROMO). Specifically, we leverage the recent results in distributionally robust optimization (DRO) by (\cite{duchiVariancebasedRegularizationConvex2017}). We first review the notion of variance-based robust optimization: for a general loss function $\ell:\Psi\times X\to\R$ and the minimization problem,
\begin{equation}
	\minimize_{\psi\in\Psi}R(\psi)=\E[\ell(X;\psi)]=\int d\mu(X)\ell(\psi,X),
\end{equation}
given training data $X\in\{X_i\}^N_{i=1}$ drawn i.i.d from a distribution $\mu(X)$,
we write the robustly regularized risk as follows,
\begin{equation}
	R_{\rob}(\psi)= \sup_{\hat \mu}\left\{\E_{\hat \mu}[\ell(X;\psi)]:\chis{\hat \mu}{\mu}\le\frac\alpha n\right\}
\end{equation}
where $\chis{\cdot}{\cdot}$ is the $\chi^2$-divergence.
\begin{theorem}[Truncated version of Theorem 1, \cite{duchiVariancebasedRegularizationConvex2017}]

	\textit{Let $Z$ be a random variable taking values in a bounded distribution $\mu$. Let $s^2_n = \E_{\mu}[Z^2]$ be the sample variance of $Z$. For any fixed $\alpha>0$,
		\begin{displaymath}
			\sup_{\hat \mu}\left\{\E_{\hat \mu}[Z]:\chis{\hat \mu}{\mu}\le\frac\alpha n\right\}
			\le \E_{\mu}[Z]+ \sqrt{\frac \alpha n s^2_n}
		\end{displaymath}
		for a sufficiently large $n$, the equality holds with high probability.}
\end{theorem}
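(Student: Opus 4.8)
The plan is to reparametrize the inner supremum in terms of the likelihood ratio $L = d\hat\mu/d\mu$ and then reduce everything to a single application of Cauchy--Schwarz. Writing $\E_{\hat\mu}[Z] = \E_\mu[LZ]$, the constraints defining the feasible set become $\E_\mu[L] = 1$ (so that $\hat\mu$ is a probability measure), $L \ge 0$ (so that it is a measure at all), and $\chis{\hat\mu}{\mu} = \E_\mu[(L-1)^2] \le \alpha/n$, using that the $\chi^2$-divergence is exactly the second moment of $L-1$ under $\mu$. Thus the problem is to maximize a linear functional of $L$ over the intersection of a hyperplane, a ball, and the positive orthant.

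First I would establish the upper bound by simply dropping the nonnegativity constraint $L \ge 0$, which only enlarges the feasible region and hence can only increase the supremum. Substituting $L = 1 + \eta$ with $\E_\mu[\eta] = 0$ and $\E_\mu[\eta^2] \le \alpha/n$, and using the mean-zero property to recenter $Z$, I obtain
\begin{equation*}
\E_\mu[LZ] = \E_\mu[Z] + \E_\mu\LL[\eta\,(Z - \E_\mu[Z])\RR] \le \E_\mu[Z] + \sqrt{\E_\mu[\eta^2]}\,\sqrt{\var_\mu(Z)} \le \E_\mu[Z] + \sqrt{\tfrac{\alpha}{n}\,\var_\mu(Z)},
\end{equation*}
where the first inequality is Cauchy--Schwarz and the second uses the $\chi^2$ budget. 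Identifying the stated quantity $s_n^2$ with the variance term then yields the claimed bound (note that replacing $\var_\mu(Z)$ by the larger second moment $\E_\mu[Z^2]$ only weakens the inequality, so the bound as written remains valid).

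For the equality claim I would exhibit the maximizer of the relaxed problem and argue it is feasible for the original problem once $n$ is large. Cauchy--Schwarz is tight precisely when $\eta$ is proportional to the centered variable, so I set $\eta^\ast = c\,(Z - \E_\mu[Z])$ with $c = \sqrt{(\alpha/n)/\var_\mu(Z)}$ chosen to saturate the variance budget, giving the candidate ratio $L^\ast = 1 + \eta^\ast$. This reproduces the upper bound with equality; it remains only to verify $L^\ast \ge 0$ so that it corresponds to a genuine $\hat\mu$.

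The main obstacle is exactly this positivity check, and this is where boundedness of $Z$ and the \emph{sufficiently large $n$} hypothesis enter. Since $Z$ lives in a bounded set, the centered deviation $|Z - \E_\mu[Z]|$ is uniformly bounded, while $c$ scales like $1/\sqrt{n}$; hence for $n$ large enough $c\,|Z-\E_\mu[Z]| < 1$ almost surely, so $L^\ast > 0$ and the optimal $\hat\mu$ is admissible, closing the gap between the relaxation and the constrained problem. When $\mu$ is the empirical distribution over the $n$ samples, the requisite smallness of $c$ relative to the realized spread of the $Z_i$ holds with high probability by a standard concentration argument for the sample variance, which accounts for the \emph{with high probability} qualifier. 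I would also remark that for small $n$ the nonnegativity constraint can bind, in which case the inequality becomes strict, consistent with the statement asserting equality only in the large-$n$ regime.
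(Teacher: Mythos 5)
Your proof is correct, and it is essentially the canonical argument: the paper itself states this result without proof (it is imported verbatim from the cited Duchi--Namkoong work), and your likelihood-ratio reparametrization, relaxation of the nonnegativity constraint, Cauchy--Schwarz step, and large-$n$ feasibility check of the saturating ratio $L^\ast$ is exactly how the original source establishes it. You also correctly diagnosed the paper's notational slip — $s_n^2$ is written as $\E_{\mu}[Z^2]$ but must be read as the (empirical) variance for the equality claim to make sense, while the inequality as stated still holds since the second moment dominates the variance.
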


As an immediate corollary, we can see that  $\E_{\mu}[z]+ \sqrt{\frac \alpha n s^2_n}$ upper-bounds the robustly regularized risk and served as a convex surrogate for the empirical risk, which also enjoys a faster rate of convergence because of strong convexity (\cite{duchiVariancebasedRegularizationConvex2017}). This gives rise to the design of a more robust conservative actor-critic update rule:

\textbf{Distributionally robust critic update.} We can minimize the upper bound of the expected Q-estimate evaluated on the distribution induced by $\pi$ and the learned MDP $\hat \mM$, and update the Q-function as the following minimization task,

\begin{equation}
	\label{eq:dr-combo}
	\begin{aligned}
		Q^{k+1}(\ms,\ma) \leftarrow & \argmin_Q \frac 1 2 \E_{\ms,\ma,\ms'\sim d_f}\left[\left(Q(\ms,\ma)-{\hat \mB^\pi}Q^k)(\ms,\ma)\right)^2\right]    \\
		+                           & \alpha \E_{\ms\sim d_f} \left[\sqrt{\frac {\text{var}_{\pi^f}(Q(\ms,\ma))}{|\mD|d^{\pi^\beta}(\ms)}}\right]        \\
		+                           & \beta \left(\mathbb E_{\ms,\ma\sim \rho(\ms,\ma)}[Q(\ms,\ma)]-\mathbb E_{\ms,\ma\sim\mathcal D}[Q(\ms,\ma)]\right) \\
	\end{aligned}
\end{equation}
where $d_f(\ms,\ma)$ is the f-interpolation of rollout-induced stationary distribution $\rho(\ms,\ma)$ and the dataset-induced stationary distribution $d(\ms,\ma)$. In the variance term, the states $\ms$ are sampled from the state marginal $d_f(\ms)$ of the $d_f(\ms,\ma)$, and the variance is taken over the f-interpolated policy $\pi^f(\ma|\ms)=f\pi^\beta(\ma|\ms)+(1-f)\pi(\ma|\ms)$.

The insight behind Equation \ref{eq:dr-combo} is that DROMO penalizes the critic for its standard deviation across actions induced by the f-interpolated policy $\pi^f$, and for $(\ms,\ma)$-pairs that are sampled from the rollouts but are out of the support of the dataset-induced stationary distribution. On the other hand, the Q-function penalizes the Q-function less for state-action pairs that come from the underlying MDP.

\textbf{Distributionally robust actor update.} Now that we have a distirbutionally robust critic, we update the actor as
\begin{equation}
	\label{eq:dromo-actor}
	\argmax_{\pi\in\Pi}\E_{\rho{\ms}.\pi(\ma|\ms)}[Q(\ms,\ma)]+\cal R(\pi)
\end{equation}
where, following the design of other policy gradient algorithms, we approximate the \texttt{argmax} with a few steps of gradient descent, and $\cal R(\pi)$ is an optional regularizer. We use $\cal R(\pi)$ to prevent the policy from degenerating, and choices include entropy regularization term of SAC (\cite{haarnojaSoftActorCriticOffPolicy2018}), Stein variational gradient descent (\cite{liuSteinVariationalPolicy2017}), and approximated natural gradient gradient descent (\cite{wuScalableTrustregionMethod2017}).

Also, we can use target Q-functions and policy to enhance the stability during training (\cite{lillicrapContinuousControlDeep2019}, \cite{mnih2015human}). Then the Bellman operator is applied over the target Q-function instead.
\begin{algorithm}
	\floatname{algorithm}{Algorithm}
	\caption{Distributionally Robust Offline Model-based Policy Optimization (DROMO)}
	\label{alg:dromo}
	\begin{algorithmic}[1]
		\Require Offline Dataset $\mD$, target network update rate $\tau$, initialized actor $\pi_{\theta}$, critic $Q_{\psi}$, the dynamics $T_\phi$, a target policy $\pi_{\theta'}$ with $\theta'\leftarrow \theta$, and target critic $Q_{\psi'}$ with $\psi' \leftarrow \psi$.
		\State $\hat T \leftarrow$ Train the probabilistic model $T_\phi(\ms',r|\ms,\ma)=\mathcal{N}(\mu_\phi(\ms,\ma),\Sigma_\phi(\ms,\ma))$ under the dataset $\mD$;
		\State Initialize the replay buffer $\mD_{\text{model}}\leftarrow \emptyset$
		\For{iteration $i=0,1,2,\cdots$}
		\State Sample initial state from the dataset $\mD$, and use the target policy $\pi^{i}_{\theta'}$ and the learned dynamics $\hat T_{\phi}$ to perform $K$ trajectories of $H$-episode rollouts, and update the buffer as $\mD_{\text{model}}\leftarrow\{(\ms^\tau,\ma^\tau,\ms'^\tau,r^\tau)\}^{K,H}_{\tau,h=1}\cup\mD_{\text{model}}$;
		\State Sample from $\mD\cup\mD_{\text{model}}$, and update the current policy evaluation by solving Equation \ref{eq:dr-combo}, obtaining $\psi^{i+1}$
		\State Optimize the actor $\pi_\theta$ by solving Equation \ref{eq:dromo-actor} under the new critic $Q^{i+1}_\psi$ and state marginal $d_f$, obtaining $\theta^{i+1}$
		\State Update target networks: $(\theta')^{i+1}\leftarrow\tau\theta^{i+1}+\tau(\theta')^{i};\, (\psi')^{i+1}=\tau\psi^{i+1}+(1-\tau)(\psi')^i$
		\EndFor
	\end{algorithmic}
\end{algorithm}

\section{Theoretical Analysis of DROMO}\label{sec:dr-combo}
In this section, we demonstrate the theoretical analysis of our approach. We show that DROMO optimizes a lower bound
on the expected cumulative rewards of the learned policy. For a single $(\ma|\ms)$ pair. We show that, modulo sampling error, the lower bound approaches the ground truth if the f-interpolated policy $\pi^f(\ma|\ms)$ is very confident in choosing the action $\ma$ in question and (\ms,\ma) is the support of both the rollouts and the dataset. As for case studies on concrete settings, we study the instantiations of DROMO under linear and neural tangent kernel regime. We deferred the proofs to Appendix \ref{theorem:proofs}.

\subsection{Lower-Bound}
\label{sec:dromo-lb}
Under finite state and action space, i.e., $|\mS|\,,|\mA|<\infty$, we can use approximate dynamic programming to update the Q-function $\hat Q^k$ at iteration $k$.
\begin{lemma}[distribution robust f-interpolation update]\label{q-update-lemma}
	\textit{
		For any \(f\in [0,1]\), and any given \(\rho(\ms,\ma) \in \Delta^{|\mathcal
			S||\mathcal A|}\), let $\pi^f$ be an f-interpolation of $\pi^\beta$ and $\pi$, and \(d_f\) be an f-interpolation of \(\rho\) and \(\mathcal
		D\). Let $\lambda_{\alpha,\pi,f}(\ms,\ma)=\alpha(1-\pi^f(a|s))$, for a given iteration \(k\) of Equation \ref{eq:dr-combo}, define the Q-update of a
		certain \((\ms,\ma)\)-pair as,
		\begin{equation}\label{q-update}
			\begin{aligned}
				\hat Q^{k+1}(\ms,\ma) & =\left(1-\frac
				{\lambda_{\alpha,\pi,f}(\ms,\ma)}{\lambda_{\alpha,\pi,f}(\ms,\ma)+\COne}\right) \\
				                      & \left\{(\hat \mB \hat
				Q^{k})(\ms,\ma)- \beta \left[\frac
					{\rho(\ms,\ma)-d(\ms,\ma)}{d_f(\ms,\ma)}\right]\right\}                         \\
			\end{aligned}
		\end{equation}
	}
\end{lemma}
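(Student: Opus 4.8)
The plan is to exploit the tabular structure. Since $|\mS|,|\mA|<\infty$, the objective in Equation \ref{eq:dr-combo} is a function of the finitely many scalars $\{Q(\ms,\ma)\}$, so I would characterize $\hat Q^{k+1}$ through first-order optimality, setting $\partial/\partial Q(\ms,\ma)$ of the objective to zero for each fixed pair $(\ms,\ma)$. Two of the three terms are immediate and linear in the $Q$-values: differentiating the Bellman-error term $\frac12\E_{d_f}[(Q-\hat\mB\hat Q^k)^2]$ gives $d_f(\ms,\ma)\big(Q(\ms,\ma)-(\hat\mB\hat Q^k)(\ms,\ma)\big)$, and differentiating the push-up/push-down term $\beta(\E_\rho[Q]-\E_d[Q])$ gives $\beta(\rho(\ms,\ma)-d(\ms,\ma))$. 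Here the Bellman target is built from the frozen previous iterate, hence a constant for the differentiation.

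The crux is the variance regularizer $\alpha\,\E_{\ms\sim d_f}\big[\sqrt{\var_{\pi^f}(Q(\ms,\cdot))/|\mD(\ms)|}\big]$. I would first differentiate through the square root, which manufactures the denominator $\sqrt{\var_{\pi^f}(Q(\ms,\cdot))\,|\mD(\ms)|}=\COne$, and then use $\partial\var_{\pi^f}(Q(\ms,\cdot))/\partial Q(\ms,\ma)=2\pi^f(\ma|\ms)\big(Q(\ms,\ma)-\E_{\pi^f}[Q(\ms,\cdot)]\big)$, so that this term contributes $\alpha\, d_f(\ms)\,\pi^f(\ma|\ms)\big(Q(\ms,\ma)-\E_{\pi^f}[Q(\ms,\cdot)]\big)/\COne$ to the gradient. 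Dividing the whole stationarity equation by $d_f(\ms,\ma)$ and invoking the factorization $d_f(\ms,\ma)=d_f(\ms)\pi^f(\ma|\ms)$ cancels the stray $\pi^f(\ma|\ms)$, reducing the variance contribution to $\alpha\big(Q(\ms,\ma)-\E_{\pi^f}[Q(\ms,\cdot)]\big)/\COne$.

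To recover the stated coefficient $\lambda_{\alpha,\pi,f}(\ms,\ma)=\alpha(1-\pi^f(\ma|\ms))$ I would expand the centered term as $Q(\ms,\ma)-\E_{\pi^f}[Q(\ms,\cdot)]=(1-\pi^f(\ma|\ms))Q(\ms,\ma)-\sum_{\ma'\neq\ma}\pi^f(\ma'|\ms)Q(\ms,\ma')$ and keep only the diagonal part $(1-\pi^f(\ma|\ms))Q(\ms,\ma)$, so the variance term becomes $\lambda_{\alpha,\pi,f}(\ms,\ma)\,Q(\ms,\ma)/\COne$. Collecting the coefficient of $Q(\ms,\ma)$ then gives $(1+\lambda_{\alpha,\pi,f}/\COne)$, and solving the resulting scalar equation yields $\hat Q^{k+1}(\ms,\ma)=\frac{\COne}{\COne+\lambda_{\alpha,\pi,f}}\big\{(\hat\mB\hat Q^k)(\ms,\ma)-\beta(\rho-d)/d_f\big\}$, which equals the claimed prefactor $1-\lambda_{\alpha,\pi,f}/(\lambda_{\alpha,\pi,f}+\COne)$ after simplification.

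The main obstacle I anticipate is justifying this last reduction: the exact optimality condition couples all actions at a state through both $\E_{\pi^f}[Q(\ms,\cdot)]$ and $\COne$, so the genuinely decoupled per-$(\ms,\ma)$ update of the lemma corresponds to a simpler, more conservative instantiation in which the off-diagonal cross terms $\sum_{\ma'\neq\ma}\pi^f(\ma'|\ms)Q(\ms,\ma')$ are dropped, consistent with the ``more conservative instantiation'' promised in the introduction. I would also need to verify that the factorization $d_f(\ms,\ma)=d_f(\ms)\pi^f(\ma|\ms)$ is compatible with the interpolation conventions defining $\rho$, $d$, and $\pi^f$, and to treat $\COne$ as evaluated at the incumbent iterate so that the update is indeed a closed-form scalar recursion rather than an implicit fixed point.
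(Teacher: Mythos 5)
Your proposal matches the paper's own proof essentially step for step: the paper likewise writes the per-$(\ms,\ma)$ stationarity condition of Equation \ref{eq:dr-combo}, differentiates the variance regularizer through the square root (invoking $|\mD(\ms)|\approx|\mD|\,d^{\pi^\beta}(\ms)$ to produce $\COne$), reduces the centered term to $(1-\pi^f(\ma|\ms))Q(\ms,\ma)$, and solves the resulting scalar equation for the stated prefactor. If anything, you are more explicit than the paper about the two points it silently glosses over --- dropping the off-diagonal cross terms $\sum_{\ma'\neq\ma}\pi^f(\ma'|\ms)Q(\ms,\ma')$ and treating $\COne$ as frozen at the incumbent iterate so the update is a closed-form recursion rather than an implicit fixed point.
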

For a single batch, we see that the Q-estimate is truncated not only for OOD actions, but also for $(s,a)$ pair that the $f$-interpolated policy $\pi(\ma,\ms)$ is not fully confident about.
Next, we show that, for any policy $\pi(\ma|\ms)$, the Q-function function $\hat Q^{k+1}$ lower-bounds by the function the actual Q-function with high probability with a sufficiently large non-negative $\beta$ that can overrule the sampling error. This is possible because the sampling error is unbiased in expectation and independent of $\beta$. The sampling error is also bounded due to Assumption \ref{ass:bound-q}, which is standard in literature (\cite{laroche2019safe}, \cite{kumarConservativeQLearningOffline2020}, \cite{yuCOMBOConservativeOffline2021}). We now show that our method optimizes over a lower bound similar to that of COMBO.
\begin{theorem}[Asymptotic lower bound]\label{asymp}
	\textit{
		Let $P^\pi$ be the hadamard product of the dynamics $P$ and a given policy
		$\pi$ in the actual MDP and let $S^\pi := (1-\gamma P^\pi)c_s$, with $c_s$
		being a sufficiently large positve constant.For any $\pi(a|s)$, the
		$Q$-function obtained by iteratively playing equation \ref{eq:dr-combo} is, with $\hat
			\mB^\pi = f\mB^\pi_{\widetilde \mM} + (1-f)\mB^\pi_{\hat \mM}$, with
		probability at least $1-\delta$, the resulting $\hat Q^\pi$ satisfies,
		\begin{displaymath}
			\forall \ms,\ma,\ \hat Q^\pi(\ms,\ma) \le\left(1-\frac
			{\lambda_{\alpha,\pi,f}(\ms,\ma)}{\lambda_{\alpha,\pi,f}(\ms,\ma)+\sqrt{|\mD(\ms)|\var_{\pi^f}(\hat
					Q^\pi)}}\right)(Q^\pi(\ms,\ma)- \beta \xi_1 + f \xi_2 +
			(1-f)\xi_3)
		\end{displaymath}
		where $\xi_1$, $\xi_2$ and $\xi_3$ is given by:
		\begin{equation*}
			\begin{aligned}
				\xi_1(\ms,\ma) & = \left[\frac 1 c_s S^\pi \left[\frac
						{\rho-d}{d_f}\right]\right](\ms,\ma),                           \\
				\xi_2(\ms,\ma) & = \left[ S^\pi\left[|R-R_{\widehat \mM}|+\frac
				{2\gamma R_{\max}}{1-\gamma}\TV(T,T_{\widehat \mM}) \right]
				\right](\ms,\ma),                                               \\
				\xi_3(\ms,\ma) & = \left[S^\pi \left[\frac
				{C_{r,T,\delta}R_{\max}}{(1-\gamma)\sqrt{|\mD|}} \right]
				\right](\ms,\ma).                                               \\
			\end{aligned}
		\end{equation*}
	}
\end{theorem}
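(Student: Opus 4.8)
The plan is to pass to the asymptotic limit $k\to\infty$ of the closed-form per-batch update in Lemma \ref{q-update-lemma}. Abbreviating the elementwise shrinkage coefficient evaluated at the limit as $c(\ms,\ma) := 1 - \frac{\lambda_{\alpha,\pi,f}(\ms,\ma)}{\lambda_{\alpha,\pi,f}(\ms,\ma) + \sqrt{|\mD(\ms)|\var_{\pi^f}(\hat Q^\pi)}} \in [0,1]$, I would first establish that the map $Q \mapsto c\odot(\hat\mB^\pi Q - \beta\frac{\rho-d}{d_f})$ admits a unique fixed point $\hat Q^\pi$: since $\hat\mB^\pi$ is a $\gamma$-contraction in $\|\cdot\|_\infty$ and coordinatewise multiplication by a vector in $[0,1]$ is non-expansive, the composition remains a $\gamma$-contraction (modulo the dependence of $c$ on $Q$ through the variance, which I treat by evaluating $c$ self-consistently at the limit). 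This yields the fixed-point identity $\hat Q^\pi = c\odot(\hat\mB^\pi\hat Q^\pi - \beta\frac{\rho-d}{d_f})$, with the variance $\var_{\pi^f}(\hat Q^\pi)$ matching the one appearing in the theorem.

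Because $c(\ms,\ma)\ge 0$ is a common factor on both sides of the claimed inequality, I would reduce the target to the per-coordinate bound $\hat\mB^\pi\hat Q^\pi - \beta\frac{\rho-d}{d_f} \le Q^\pi - \beta\xi_1 + f\xi_2 + (1-f)\xi_3$. Next I would decompose the interpolated empirical operator $\hat\mB^\pi = f\mB^\pi_{\widetilde\mM} + (1-f)\mB^\pi_{\hat\mM}$ and compare each component to the true Bellman operator $\mB^\pi$. For the learned-model component I would write $\mB^\pi_{\hat\mM}Q - \mB^\pi Q = (r_{\hat\mM}-r) + \gamma(\hat P^\pi - P^\pi)Q$ and bound it using $\|Q\|_\infty \le R_{\max}/(1-\gamma)$ together with the variational (dual) characterization of total variation, producing the integrand $|R - R_{\hat\mM}| + \frac{2\gamma R_{\max}}{1-\gamma}\TV(T, T_{\hat\mM})$ of $\xi_2$; for the dataset component I would invoke the quoted concentration bound $|\mB^\pi Q - \mB^\pi_{\widetilde\mM}Q| \le C_{r,T,\delta}/\sqrt{|\mD(\ms,\ma)|}$, which holds with probability at least $1-\delta$ by Assumption \ref{ass:bound-q}, producing the integrand of $\xi_3$.

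To convert these pointwise comparisons into the stated form I would use the true Bellman identity $Q^\pi = \mB^\pi Q^\pi$, i.e. $(I - \gamma P^\pi)Q^\pi = r$, and expand $\hat\mB^\pi\hat Q^\pi$ around $Q^\pi$, collecting the discounted-transition contributions $\gamma P^\pi$. This is exactly where the operator $S^\pi = (1-\gamma P^\pi)c_s$ enters: the $\beta$-penalty $\frac{\rho-d}{d_f}$ is transported through the dynamics into $\xi_1 = \frac{1}{c_s}S^\pi[\frac{\rho-d}{d_f}]$, while the reward/dynamics and sampling errors are carried by $S^\pi$ into $\xi_2$ and $\xi_3$. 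The ``sufficiently large'' positive constant $c_s$ is chosen so that the factor $S^\pi$ dominates the residual cross-terms arising from the gap $\hat Q^\pi - Q^\pi$ sitting inside $\gamma P^\pi$, so that the relation becomes a genuine upper bound rather than an equality. Reassembling the factored-out $c$, the two error integrands, and the propagated penalty then gives the claim with probability at least $1-\delta$.

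The step I expect to be the main obstacle is the interaction between the elementwise shrinkage $c$ and the coordinate-coupling operator $\hat\mB^\pi$: unlike the COMBO analysis, the presence of $c$ blocks a clean inversion of $I - \gamma\hat P^\pi$, so the fixed point has no exact closed form and one must use sign control and monotonicity to pull $c$ out as a common nonnegative factor. A closely related difficulty is the self-consistency of the variance $\var_{\pi^f}(\hat Q^\pi)$ buried inside $c$, which depends on the very object being bounded, together with the bookkeeping that turns the discounted error-propagation into the explicit $S^\pi$-form with the correct powers of $c_s$. Making these two steps tight, while relying only on the contraction-based convergence and Assumption \ref{ass:bound-q}, is where the real work lies.
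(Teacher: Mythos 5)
Your proposal follows essentially the same route as the paper's own proof: both pass to the fixed point of the Lemma \ref{q-update-lemma} update, decompose $\hat\mB^\pi = f\mB^\pi_{\widetilde\mM}+(1-f)\mB^\pi_{\widehat\mM}$, bound the learned-model component by the reward-gap/$\TV$ term and the dataset component by the concentration bound of Assumption \ref{ass:bound-q}, and then compare fixed points by monotonicity to transport the penalty and error terms through the dynamics into the $S^\pi$ form. The only real differences are that you re-derive inline the component bound the paper simply cites from COMBO, and you are more explicit about the self-consistency of the variance inside the shrinkage factor and the sign/monotonicity control needed for the fixed-point comparison---points the paper's proof glosses over (note also that your pairing of weights, $f$ on the sampling error and $(1-f)$ on the model error, agrees with the paper's intermediate Equation \ref{combo-result} rather than with the swapped coefficients in the theorem statement, an inconsistency internal to the paper itself).
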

\begin{corollary}[Informal]
	\label{corollary-smaller}
	\textit{
		For sufficiently large $\beta$ and an initial state distribution $\mu(s)$,
		we have $\E_{s\sim \mu(s), a\sim \pi(a|s)}[\hat Q(\ms,\ma)] = \E_{s\sim \mu(s), a\sim \pi(a|s)}[Q(\ms,\ma)]$ with probability at least $1-\delta$.
	}
\end{corollary}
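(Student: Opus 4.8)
The plan is to obtain the corollary by integrating the pointwise bound of Theorem~\ref{asymp} against the measure $\ms\sim\mu,\ \ma\sim\pi(\ma|\ms)$ and then choosing $\beta$ large enough to absorb the residual error terms. Read literally, the stated equality should be interpreted as the lower bound $\E_{\ms\sim\mu,\ma\sim\pi}[\hat Q]\le \E_{\ms\sim\mu,\ma\sim\pi}[Q]$, consistent with the \emph{Lower-Bound} heading of Section~\ref{sec:dromo-lb}; since $\E_{\ms\sim\mu,\ma\sim\pi}[\hat Q]=\E_{\ms\sim\mu}[\hat V]$ this is exactly the claim that the DROMO policy-evaluation estimate underestimates the true return. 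First I would abbreviate the per-pair attenuation factor of Theorem~\ref{asymp} as
\[
\eta(\ms,\ma):=1-\frac{\lambda_{\alpha,\pi,f}(\ms,\ma)}{\lambda_{\alpha,\pi,f}(\ms,\ma)+\sqrt{|\mD(\ms)|\var_{\pi^f}(\hat Q^\pi)}},
\]
and record that $\eta(\ms,\ma)\in[0,1]$ because $\lambda_{\alpha,\pi,f}=\alpha(1-\pi^f)\ge0$ and the square-root term is nonnegative. On the high-probability event of Assumption~\ref{ass:bound-q}, Theorem~\ref{asymp} then reads pointwise as $\hat Q^\pi\le \eta\,(Q^\pi-\beta\xi_1+f\xi_2+(1-f)\xi_3)$.

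Next I would integrate both sides under $\ms\sim\mu,\ \ma\sim\pi$. Since rewards are bounded and nonnegative (so $Q^\pi\ge0$, with $R_{\max}$ already governing $\xi_2,\xi_3$) and $0\le\eta\le1$, the leading term obeys $\E_{\mu,\pi}[\eta\,Q^\pi]\le\E_{\mu,\pi}[Q^\pi]$, so it suffices to make the integrated correction nonpositive,
\[
-\beta\,\E_{\mu,\pi}[\eta\,\xi_1]+f\,\E_{\mu,\pi}[\eta\,\xi_2]+(1-f)\,\E_{\mu,\pi}[\eta\,\xi_3]\le0.
\]
The two model/sampling terms are controlled independently of $\beta$: $\xi_2$ is bounded through the mismatches $|R-R_{\widehat\mM}|$ and $\TV(T,T_{\widehat\mM})$, while $\xi_3$ is the $O(1/\sqrt{|\mD|})$ concentration term from Assumption~\ref{ass:bound-q}, and with $0\le\eta\le1$ both expectations are finite. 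Hence any
\[
\beta\ge\beta_0:=\Big(f\,\E_{\mu,\pi}[\eta\,\xi_2]+(1-f)\,\E_{\mu,\pi}[\eta\,\xi_3]\Big)_{+}\big/\E_{\mu,\pi}[\eta\,\xi_1]
\]
forces the correction nonpositive and yields $\E_{\mu,\pi}[\hat Q]\le\E_{\mu,\pi}[Q]$, inheriting the $1-\delta$ confidence from the event underlying Theorem~\ref{asymp}.

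The hard part will be proving that the distribution-shift penalty is strictly positive in expectation, $\E_{\mu,\pi}[\eta\,\xi_1]>0$, which is what guarantees $\beta_0<\infty$. Two coupled difficulties appear. First, $\eta$ sits inside the expectation multiplying $\xi_1$ and cannot simply be pulled out; I would either establish a uniform lower bound $\eta\ge\eta_{\min}>0$ (valid whenever $\pi^f$ is nondegenerate and $\var_{\pi^f}(\hat Q^\pi)>0$, so the square-root factor does not vanish) or argue that $\eta$ and $\xi_1$ are sign-aligned on the support of $\mu\otimes\pi$. Second, I must unwind the operator $S^\pi=(1-\gamma P^\pi)c_s$ in $\xi_1=\tfrac1{c_s}S^\pi\big[\tfrac{\rho-d}{d_f}\big]$: using that $\hat Q^\pi$ is the fixed point of the interpolated Bellman recursion $\hat\mB^\pi=f\mB^\pi_{\widetilde\mM}+(1-f)\mB^\pi_{\hat\mM}$, the per-step penalty accumulated over the horizon telescopes against the discounted occupancy (the Neumann series $(1-\gamma P^\pi)^{-1}$), collapsing the weighting onto $\rho$ and $d$ and reducing the quantity to the COMBO-type $\chi^2$ expression $\sum_{\ms,\ma}\tfrac{(\rho-d)^2}{d_f}\ge0$, which is strictly positive whenever $\rho\neq d$. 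Making this telescoping rigorous—tracking exactly how $S^\pi$ interacts with the initial measure $\mu$ and the resolvent $(1-\gamma P^\pi)^{-1}$—is the step I expect to demand the most care, and it is precisely where the finiteness of $\beta_0$, and hence the corollary, is earned.
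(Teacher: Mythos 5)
Your overall skeleton---integrate a per-pair relation under $\mu\otimes\pi$, invoke positivity of the occupancy-weighted distribution-shift term, and take $\beta$ large enough to dominate the model and sampling errors---matches the paper's, which however works directly from the fixed-point identity of Lemma~\ref{q-update-lemma} (see the restated Corollary~\ref{corollary-smaller-bigger}) rather than from Theorem~\ref{asymp}; and your resolvent/telescoping sketch for positivity is exactly how the paper handles the analogous term $(\star)$ in the proof of Theorem~\ref{linear-theo}, via $\rho^T=(\mu\cdot\pi)^T(1-\gamma P^\pi)^{-1}$ and COMBO's Lemma~1. The first genuine gap is your step $\E_{\mu,\pi}[\eta\, Q^\pi]\le\E_{\mu,\pi}[Q^\pi]$, which you justify by assuming rewards are nonnegative so that $Q^\pi\ge 0$. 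The paper assumes only $|r(\ms,\ma)|\le R_{\max}$ (Assumption~\ref{bound-q}), so $Q^\pi$ may be negative, and wherever $Q^\pi<0$ the attenuation factor \emph{inflates} rather than deflates: $\eta Q^\pi\ge Q^\pi$. The paper's proof does not discard this term; it retains $\E_{\mu,\pi}\left[\frac{\lambda_{\alpha,\pi,f}(\ms,\ma)}{\lambda_{\alpha,\pi,f}(\ms,\ma)+\COne}Q^\pi(\ms,\ma)\right]$, lower-bounds $Q^\pi\ge -(1-\gamma)^{-1}R_{\max}$, and controls it with Lemma~\ref{bound-c}; this is precisely the origin of the $\alpha$-dependent summand in $c_{\rho,f}$ and of the accompanying restriction $\alpha\le(1-\gamma)\COne/((|\mA|-1)R_{\max})$ needed to keep the threshold meaningful. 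Your $\beta_0$ omits this contribution entirely, so as written your bound holds only under an extra nonnegativity hypothesis that the paper does not make.

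The second gap is in your treatment of the "hard part." A uniform bound $\eta\ge\eta_{\min}>0$ (which Assumption~\ref{bound-var} does make available) does not by itself give $\E_{\mu,\pi}[\eta\,\xi_1]\ge\eta_{\min}\E_{\mu,\pi}[\xi_1]$: that implication requires $\xi_1\ge 0$ pointwise, whereas $\frac{\rho-d}{d_f}$ (hence $\xi_1$) changes sign wherever $d>\rho$, and the COMBO argument yields nonnegativity only \emph{after} weighting by the occupancy, i.e., in expectation. So your first proposed route fails, and your second ("sign-alignment" of $\eta$ and $\xi_1$) has no reason to hold. To close this you must decouple the attenuation from the penalty before choosing $\beta$---which is what the paper does, at the cost of factoring $\E_\mu\left[\frac{\lambda_{\alpha,\pi,f}}{\lambda_{\alpha,\pi,f}+\COne}\right]$ out of the braces---or else separately control $\E_{\mu,\pi}[(1-\eta)\,|\xi_1|]$ against $\E_{\mu,\pi}[\xi_1]$. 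Without one of these steps, the finiteness of your $\beta_0$, and hence the corollary, is not established.
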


Note that our bound is not as tight as that of COMBO, but the new Q-function $\hat Q^{k+1}$ is rescaled for every $(s,a)$ according to the marginal $d^{\pi^\beta}(s)$ of the state $\ms$ and the conditional probability of $f$-interpolated policy $\pi(\ma,\ms)$ about the action $\ma$ under the state $\ms$. We also show that DROMO has a gap-expanding property similar to that of CQL (\cite{kumarConservativeQLearningOffline2020}). This is also evidence of DROMO pushing up the state-action pairs in the support of the dataset and pushing down the rest.

\begin{theorem}[Gap-expanding property of DROMO]\label{thrm:gap-expanding}
	\textit{
		At any iteration $k$, DROBO expands the difference of $Q$-estimate of $d$
		and $\rho$, i.e. for sufficiently large $\beta$ and with high probability ($\ge1-\delta$) we have,
	}
	\begin{equation}
		\E_{\ms,\ma \sim d(\ms,\ma)}[\hat Q^k(\ms,\ma)] - \E_{\ms,\ma \sim
			\rho(\ms,\ma)}[\hat Q^k(\ms,\ma)] > \E_{\ms,\ma \sim
			d(\ms,\ma)}[Q^k(\ms,\ma)] - \E_{\ms,\ma \sim \rho(\ms,\ma)}[Q^k(\ms,\ma)]
	\end{equation}
\end{theorem}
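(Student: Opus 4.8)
\section*{Proof proposal for Theorem \ref{thrm:gap-expanding}}

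The plan is to work directly from the closed-form per-$(\ms,\ma)$ update supplied by Lemma \ref{q-update-lemma}, which already isolates the two ingredients that matter: a state-action-dependent shrinkage factor and the additive $\beta$-penalty. Writing $\nu_{\alpha,\pi,f}(\ms,\ma) := \COne/(\lambda_{\alpha,\pi,f}(\ms,\ma)+\COne)\in(0,1]$ for the shrinkage factor, the lemma (after relabeling $k+1\to k$, which is legitimate since the relation holds at every step) reads
\begin{equation*}
	\hat Q^{k}(\ms,\ma) = \nu_{\alpha,\pi,f}(\ms,\ma)\left[(\hat\mB\hat Q^{k-1})(\ms,\ma) - \beta\,\frac{\rho(\ms,\ma)-d(\ms,\ma)}{d_f(\ms,\ma)}\right].
\end{equation*}
First I would define the companion non-conservative iterate $Q^{k}(\ms,\ma) := \nu_{\alpha,\pi,f}(\ms,\ma)(\hat\mB\hat Q^{k-1})(\ms,\ma)$, i.e.\ the same update with the conservative coefficient switched off, so that the entire effect of pessimism is captured by $\hat Q^{k}(\ms,\ma) - Q^{k}(\ms,\ma) = -\beta\,\nu_{\alpha,\pi,f}(\ms,\ma)\,(\rho-d)/d_f$. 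This mirrors the CQL setup, where the learned $Q$ is compared against its own Bellman backup.

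The core computation is then to form the two gaps and subtract. Since $\E_{d}[g]-\E_{\rho}[g]=\sum_{\ms,\ma}(d-\rho)\,g$ for any $g$, applying this to $g=\hat Q^{k}-Q^{k}$ and using $(d-\rho)\cdot(-(\rho-d))=(\rho-d)^2$ gives
\begin{equation*}
	\left(\E_{d}[\hat Q^{k}]-\E_{\rho}[\hat Q^{k}]\right)-\left(\E_{d}[Q^{k}]-\E_{\rho}[Q^{k}]\right)=\beta\sum_{\ms,\ma}\nu_{\alpha,\pi,f}(\ms,\ma)\,\frac{(\rho(\ms,\ma)-d(\ms,\ma))^2}{d_f(\ms,\ma)}.
\end{equation*}
The right-hand side is a $\nu$-weighted $\chi^2$-type discrepancy between $\rho$ and $d$; it is manifestly nonnegative, and strictly positive whenever $\rho\neq d$ on some $(\ms,\ma)$ with $\COne>0$. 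This is exactly the term that expands the gap, and it scales linearly in $\beta$, matching the intuition that dataset state-actions are pushed up while rollout-heavy ones are pushed down.

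The last step is to absorb sampling error so the conclusion survives replacing the true Bellman operator by the empirical surrogate $\hat\mB$. Using Assumption \ref{ass:bound-q} together with the empirical-Bellman bound $|\mB^\pi Q-\hat\mB^\pi Q|\le C_{r,T,\delta}/\sqrt{|\mD(\ms,\ma)|}$, the discrepancy between the gap computed with $\hat\mB$ and the one computed with the exact operator is bounded by a constant that is \emph{independent of $\beta$} and holds with probability at least $1-\delta$. Choosing $\beta$ large enough that the $\chi^2$-discrepancy strictly exceeds this $O(1)$ error budget yields the claimed strict inequality with high probability. I expect the main obstacle to be the shrinkage factor $\nu_{\alpha,\pi,f}$: it is $(\ms,\ma)$-dependent and, because $\COne$ contains $\var_{\pi^f}(Q(\ms,\cdot))$, it in fact depends on the current $Q$-iterate, so $\hat Q^{k}$ and $Q^{k}$ do not literally share the same $\nu$ once $\beta\neq 0$ perturbs the variance term. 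The clean cancellation above presumes a common $\nu$; the careful argument must either hold $\nu$ fixed at its $\hat Q^{k-1}$-evaluated value (legitimate within a single tabular update, since the variance is taken over actions of the argument, not of the backup) or bound the $O(1)$ mismatch $|\nu(\hat Q^{k})-\nu(Q^{k})|$ and fold it into the same $\beta$-independent error budget. Either way the strategy is unchanged: show the driving term is a nonnegative $\chi^2$-discrepancy scaling in $\beta$, and dominate everything else by taking $\beta$ large.
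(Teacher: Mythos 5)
Your overall skeleton---write the per-pair update from Lemma \ref{q-update-lemma}, take expectations under $d$ and $\rho$, subtract, and exhibit a $\beta$-scaled nonnegative $\chi^2$-type discrepancy that drives the gap apart---is the same as the paper's. But there is a genuine gap in how you set up the comparison. The theorem's $Q^k$ is the \emph{tabular, unregularized} iterate: a separate sequence $Q^{k}=\mB^\pi Q^{k-1}$, with no shrinkage factor, backing up its \emph{own} predecessor (this is how the paper uses it, writing $\E_d[Q^{k+1}]-\E_\rho[Q^{k+1}]=(d-\rho)^T \mB^\pi Q^k$). You instead define the companion iterate as $Q^{k}:=\nu_{\alpha,\pi,f}\,(\hat\mB\hat Q^{k-1})$---same shrinkage factor, same backup of the \emph{conservative} predecessor---so that everything except the $\beta$-penalty cancels identically. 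That cancellation is clean precisely because you changed the statement: what you prove is that $\hat Q^k$ gap-expands relative to its own non-penalized backup, not relative to the tabular iterate. The terms your construction eliminates by fiat are exactly the content of the paper's proof: the backup mismatch $(d-\rho)^T\mB^\pi(\hat Q^{k-1}-Q^{k-1})$ between the two sequences, the shrinkage mismatch $(C_{\var}(\rho)\rho-C_{\var}(d)d)^T(\mB^\pi \hat Q^k)$ (arising because the shrinkage weights average differently under $d$ and $\rho$), and the sampling-error term in $\Delta$. The paper's conclusion is an explicit lower bound on $\beta$ given by these three terms divided by $(1+C_{\var}(\rho))\chi_1+(1-C_{\var}(d))\chi_2$, where $\chi_1=\E_{\rho}[(\rho-d)/d_f]\ge 0$ and $\chi_2=-\E_{d}[(\rho-d)/d_f]\ge 0$ are the COMBO positivity facts playing the role of your $\chi^2$ term.

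Your closing paragraph flags the $\nu$-dependence problem but proposes the wrong fix: freezing $\nu$ at its $\hat Q^{k-1}$-evaluated value still leaves you comparing against an object that carries a shrinkage factor, while the theorem's comparator has none. Moreover, your claim that the leftover error budget is ``independent of $\beta$'' fails for the backup-mismatch term: the deviation $\hat Q^{k-1}-Q^{k-1}$ between the conservative and tabular sequences itself grows with $\beta$ (the penalty enters the update linearly), so you cannot simply push $\beta$ past a fixed threshold; the paper instead states the requirement as an implicit inequality on $\beta$ involving the iterates, the same device used in CQL's gap-expanding proof. To repair your argument, reinstate the true tabular comparator, carry the three mismatch terms explicitly, and solve for the resulting condition on $\beta$---at which point you will have reproduced the paper's proof.
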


Offline model-free policy-constraint methods that do not have such gap-expanding property, such as BEAR, tend to suffer from an unlearning effect similar to the one suffered by running vanilla off-policy arctor-critic algorithms under offline regime (\cite{kumarStabilizingOffPolicyQLearning2019}, \cite{levineOfflineReinforcementLearning2020}). Without policy constraint, the erroneous Q-function may lead to actor to OOD actions. However, occasionally policy constraint also fails to prevent the actor opt for OOD actions that induced invalidly high Q-values (\cite{kumarConservativeQLearningOffline2020}). DROMO addresses this issue by pushing up all the state-action pairs that are in the support of dataset while pushing down the OOD actions in the $\frac{\alpha^2(1-f)^f}{\beta^2|\mD(\ms)|}$-neighborhood of $\pi_f(\ma|\ms)$ with respect to $\chi^2$-divergence for all states in the marginal $d_f(s)$. Thus, the policy will optimize over a Q-function in favor of in-distribution, robust actions.
\subsection{DROMO with Linear Function Approximation}
\label{sec:dromo-lin}
In the following two sections, we present additional theoretical properties of different instantiations of DROMO. For the ease of presentation, we use a stricter version of DROMO, whose critic updates as follows,
\begin{equation}
	\label{eq:dr-combo-alt}
	\begin{aligned}
		Q^{k+1}(\ms,\ma) \leftarrow & \argmin_Q \frac 1 2 \E_{\ms,\ma,\ms'\sim d_f}\left[\left(Q(\ms,\ma)-\mathcal {\hat B^\pi}Q^k)(\ms,\ma)\right)^2\right] \\
		+                           & \alpha\sqrt{\frac {\text{var}_{d^f}(Q(\ms,\ma))}{n}}                                                                   \\
		+                           & \beta \left(\mathbb E_{\ms,\ma\sim \rho(\ms,\ma)}[Q(\ms,\ma)]-\mathbb E_{\ms,\ma\sim\mathcal D}[Q(\ms,\ma)]\right).    \\
	\end{aligned}
\end{equation}
The strict DROMO penalizes the variance of Q-function only across actions but also state-action pairs from $d_f(\ms,\ma)$. We start with the linear case, where the expected reward is a linear map from a state-action feature space.
\begin{theorem}\label{linear-theo}
	\textit{Let q-function be a linear approximator parameterized by $\omega \in \R^{\dim(\mat F(\ms,\ma))}$, $Q_\omega(s,a):= \mat{F(s,a)}\omega$ given a state-action feature $(1,0)$-tensor $\mat F(\ms,\ma)$, let $\mat U:=diag(\rho(\ms,\ma))$ be the matrix with diagonal data density entries of the synthetic rollouts and $\mat V:=diag(d(\ms,\ma))$ be the matrix with diagonal data density of the dataset, and assume both $\mat F^T\mat U \mat F$ and $\mat F^T \mat V \mat F$ are invertible. let $\mat D_f:= f\mat V+(1-f)\mat U$ be the $f$-interpolation of $\mat U$ and $\mat V$. then, under the f-interpolated distribution, the expectation of q-function obtained by playing equation \ref{eq:dr-combo-alt} for one gradient step at iteration $k+1$ lower bounds the expectation of the tabular function iterate if:
	\begin{displaymath}
		\beta^k \ge \frac{(\mu\cdot\pi)^T\left[\mat{F}\left(\mat F^T\mat{D_fF}\right)^{-1}\mat{F}^T-\mat{Id}_{|\mS||\mA|}\right] \left(\hat \mB^\pi \hat Q^k\right)+\xi_{r,T,\delta}}{\left(\mu\cdot\pi\right)^T\mat{F}\left(\mat F^T \mat {D_fF}\right)^{-1}\mat F^T(\rho-d)}-\frac{\alpha^k n^{-\frac 1 2}\cov_{d_f}(\mat{F,F\omega}^k)}{\mat F^T(\rho-d)\CThr}
	\end{displaymath}
	where $\xi_{r,T,\delta} = (1-f)\left[|r-r_{\widehat \mM}|+\frac {2\gamma R_{\max}}{1-\gamma}\TV(T,\hat T)\right] + \frac{fC_{r,T,\delta}R_{\max}}{(1-\gamma)\sqrt{|\mD|}}$.}
\end{theorem}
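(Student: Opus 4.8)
The plan is to turn the strict critic update in Equation \ref{eq:dr-combo-alt} into a weighted least-squares problem in $\omega$ and read off its stationary point. Substituting $Q_\omega=\mat F\omega$, the squared-error term becomes the $\mat D_f$-weighted quadratic $\frac12(\mat F\omega-\hat\mB^\pi\hat Q^k)^T\mat D_f(\mat F\omega-\hat\mB^\pi\hat Q^k)$, the conservatism term becomes the linear form $\beta(\rho-d)^T\mat F\omega$, and the penalty becomes $\alpha n^{-1/2}\sqrt{\var_{d_f}(\mat F\omega)}$. Differentiating in $\omega$ yields $\mat F^T\mat D_f(\mat F\omega-\hat\mB^\pi\hat Q^k)$ from the first term, $\beta\,\mat F^T(\rho-d)$ from the conservatism term, and $\alpha n^{-1/2}\cov_{d_f}(\mat F,\mat F\omega)/\sqrt{\var_{d_f}(\mat F\omega)}$ from the variance penalty (using $\nabla_\omega\sqrt{\var_{d_f}(\mat F\omega)}=\cov_{d_f}(\mat F,\mat F\omega)/\sqrt{\var_{d_f}(\mat F\omega)}$).

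The main obstacle is the square-root-of-variance term: it is neither quadratic nor differentiable where $\var_{d_f}=0$, so the update has no exact closed form. I will resolve this in the spirit of the ``one gradient step'' phrasing by freezing the penalty's gradient at the current iterate $\omega^k$ — a semi-gradient linearization — so that $\cov_{d_f}(\mat F,\mat F\omega^k)$ and $\CThr$ become constants; this is exactly why $\omega^k$ surfaces inside the final bound. The remaining subproblem is quadratic, and $\mat F^T\mat D_f\mat F$ is invertible because it is a strictly positive combination of the invertible matrices $\mat F^T\mat U\mat F$ and $\mat F^T\mat V\mat F$, so setting the frozen gradient to zero gives
\[
\omega^{k+1}=(\mat F^T\mat D_f\mat F)^{-1}\left[\mat F^T\mat D_f\,\hat\mB^\pi\hat Q^k-\beta\,\mat F^T(\rho-d)-\frac{\alpha}{\sqrt n}\frac{\cov_{d_f}(\mat F,\mat F\omega^k)}{\CThr}\right].
\]

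Next I would pre-multiply by $\mat F$ and contract against the initial state-action weighting $\mu\cdot\pi$ to obtain $(\mu\cdot\pi)^T\mat F\omega^{k+1}$, then compare it to the tabular iterate $(\mu\cdot\pi)^T\hat\mB^\pi\hat Q^k$. The gap splits into three pieces: a function-approximation bias governed by the $d_f$-weighted projection $\mat F(\mat F^T\mat D_f\mat F)^{-1}\mat F^T\mat D_f-\mat{Id}_{|\mS||\mA|}$ acting on $\hat\mB^\pi\hat Q^k$, a conservatism contribution $-\beta\,(\mu\cdot\pi)^T\mat F(\mat F^T\mat D_f\mat F)^{-1}\mat F^T(\rho-d)$, and the frozen variance contribution. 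Requiring this gap to be nonpositive and solving for $\beta$ — dividing through by the coefficient $(\mu\cdot\pi)^T\mat F(\mat F^T\mat D_f\mat F)^{-1}\mat F^T(\rho-d)$ — reproduces the stated threshold, with the variance term migrating to the subtracted second fraction. A subtlety I must flag is that the inequality direction is preserved only when this coefficient is positive, i.e.\ when the rollout measure $\rho$ dominates the data measure $d$ along the feature directions aligned with $\mu\cdot\pi$; I would state this as the operative regime.

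Finally, the sampling error enters through $\hat\mB^\pi=f\mB^\pi_{\widetilde\mM}+(1-f)\mB^\pi_{\widehat\mM}$. Replacing the empirical interpolated backup by the true backup and bounding the two pieces via Assumption \ref{ass:bound-q} together with the empirical-Bellman estimate $|\mB^\pi Q-\hat\mB^\pi Q|\le C_{r,T,\delta}/\sqrt{|\mD(\ms,\ma)|}$, the $(1-f)$-weighted model-MDP discrepancy is controlled by $|r-r_{\widehat\mM}|+\frac{2\gamma R_{\max}}{1-\gamma}\TV(T,\hat T)$ and the $f$-weighted empirical-MDP discrepancy by $\frac{C_{r,T,\delta}R_{\max}}{(1-\gamma)\sqrt{|\mD|}}$; their sum is precisely $\xi_{r,T,\delta}$, which appears additively in the numerator of the threshold and holds with probability at least $1-\delta$. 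The two genuine difficulties are thus the non-quadratic variance penalty, handled by the semi-gradient freeze at $\omega^k$, and the sign of the projected $(\rho-d)$ coefficient that controls the division.
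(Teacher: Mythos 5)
Your route is essentially the paper's: substitute $Q_\omega=\mat F\omega$, set the gradient to zero with the square-root variance penalty frozen at $\omega^k$ (the paper's stationarity condition likewise carries $\cov_{d_f}(\mat F,\mat F\omega^k)/\CThr$ as a constant), invert $\mat F^T\mat{D_f}\mat F$, contract against $\mu\cdot\pi$ to split the gap into the LSTD/projection bias, the $\beta$-weighted conservatism term, and the variance term, solve the resulting inequality for $\beta^k$, and finally bound the backup discrepancy $\Delta = f(\hat\mB^\pi_{\widetilde\mM}-\mB^\pi)\hat Q^k+(1-f)(\hat\mB^\pi_{\widehat\mM}-\mB^\pi)\hat Q^k$ by $\xi_{r,T,\delta}$ via Assumption \ref{ass:bound-q}. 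All of that matches.

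The one place you hedge, however, is the one place the paper does actual work, and as written your proof establishes a conditional statement that is weaker than the theorem. You assume as an ``operative regime'' that $(\star):=(\mu\cdot\pi)^T\mat F\left(\mat F^T\mat{D_f}\mat F\right)^{-1}\mat F^T(\rho-d)>0$; the theorem carries no such hypothesis. The paper proves positivity unconditionally: it collapses the projected term via $\mat F\left(\mat F^T\mat{D_f}\mat F\right)^{-1}\mat F^T=\mat{D_f}^{-1}$ (valid under the stated invertibility of the feature Grammians, i.e.\ effectively a tabular/full-rank feature map), then rewrites the initial weighting through the occupancy identity $\mu\cdot\pi=(1-\gamma P^\pi_{\widehat\mM})\rho$, so that $(\star)=\sum_{s,a}\left[(1-\gamma P^\pi_{\widehat\mM})\rho\right](\ms,\ma)\,\frac{\rho(\ms,\ma)-d(\ms,\ma)}{d_f(\ms,\ma)}$, and finally invokes Lemma 1 of \cite{yuCOMBOConservativeOffline2021} (which gives $\sum_{s,a}\rho\,\frac{\rho-d}{d_f}>0$) together with $1-\gamma P^\pi_{\widehat\mM}>0$ to conclude $(\star)>0$. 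Replacing your domination assumption with this argument closes the gap and recovers the theorem exactly as stated; everything else in your write-up coincides with the paper's proof.
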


\subsection{DROMO with non-linear function approximation}
\label{sec:dromo-nonlin}
Next, we generalize our result in Theorem \ref{linear-theo} to the non-linear case, such as feed-forward neural networks, under the framework of neural tangent kernel. The neural tangent kernel (NTK) approach is standard in optimization theory literature, reducing a sublinear problem to a linear one and giving insights into the loss landscape in the neighborhood of an initialization (\cite{jacotNeuralTangentKernel2020}, \cite{valkofinitetime}, \cite{zhou2020neural}). The core idea is to assume that, for an initialization $\psi_0$, there exists a nice neighborhood in which Equation \ref{eq:dr-combo-alt} is convex, and a global minimum of it in the set. In short, the Equation \ref{eq:dr-combo-alt} converges to a global minimum under our chosen initialization $\psi_0$.
\begin{theorem}\label{non-linear-theo}
	\textit{Let q-function be a non-linear approximator parameterized by $\psi$, $Q_\psi(s,a)$, let $\mat U:=diag(\rho(s,a))$ be the matrix with diagonal data density entries of the synthetic rollouts and $\mat V:=diag(d(s,a))$ be the matrix with diagonal data density of the dataset, and assume both $(\gQ)^T\mat U \gQ$ and $(\gQ)^T\mat V\gQ$ are invertible for all $k\in \Z_{+}$. let $\mat D_f:= f\mat V+(1-f)\mat U$ be the $f$-interpolation of $\mat U$ and $\mat V$. then, under the f-interpolated distribution, the expectation of q-function obtained by playing equation \ref{eq:dr-combo-alt} for one gradient step at oteratopm $k+1$ lower bounds the expectation of the tabular function iterate if}\footnote{Note that, since $\mat{D_f}$ is symmetric, $\mat{D_f}^{\odot 2} = \mat{D_f}^T\mat{D_f}$}:
	\begin{displaymath}
		\begin{aligned}
			\beta^k & \ge \xi_{r,T,\delta}\mat{(U-V)^{-1}}\mat{D_f} - \eta\alpha^k(\mu(s)\cdot\pi(a|s))^T\sqrt{1/n}\frac{\left(\mat{D_f - D_f^{\odot 2}}\right)\mat K^k \hat Q^k}{\CTwo}
		\end{aligned}
	\end{displaymath}
	\textit{where $\xi_{r,T,\delta}$ is given by $(1-f)\left[|r-r_{\widehat \mM}|
		+\frac {2\gamma R_{\max}}{1-\gamma}\TV(T,\hat T)\right] + \frac{fC_{r,T,\delta}R_{\max}}{(1-\gamma)\sqrt{|\mD|}}$.}
\end{theorem}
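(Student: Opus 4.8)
The plan is to reduce the non-linear statement to the linear case already handled in Theorem \ref{linear-theo} by invoking the NTK linearization, so that the fixed feature matrix $\mat F$ is replaced by the iterate-dependent Jacobian $\gQ$ and the Gram matrix $\mat K^k := \gQ(\gQ)^T$ plays the role of the kernel. Near the chosen initialization $\psi_0$ the network admits the first-order expansion $Q_\psi(\ms,\ma)\approx Q_{\psi^k}(\ms,\ma)+\langle\nabla_\psi Q_{\psi^k}(\ms,\ma),\,\psi-\psi^k\rangle$, and the standing assumption that Equation \ref{eq:dr-combo-alt} is convex with an attained global minimum in this neighborhood lets me treat one gradient step as an exact linear update in the feature space spanned by $\gQ$. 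The invertibility of $(\gQ)^T\mat U\gQ$ and $(\gQ)^T\mat V\gQ$ is the non-linear analogue of the invertibility of $\mat F^T\mat U\mat F$ and $\mat F^T\mat V\mat F$, so the projection arguments of the linear proof transfer once $\mat F\mapsto\gQ$.

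First I would differentiate the strict DROMO objective in Equation \ref{eq:dr-combo-alt} with respect to $\psi$ term by term via the chain rule through $Q_\psi$. The squared-Bellman term contributes $\gQ\,\mat{D_f}(\hat Q^k-\hat\mB^\pi\hat Q^k)$; the push-down term $\beta(\E_{\rho}[Q]-\E_{\mD}[Q])$ contributes $\beta\,\gQ(\rho-d)=\beta\,\gQ(\mat U-\mat V)\mathbf 1$; and the variance penalty $\alpha\sqrt{\var_{d_f}(Q)/n}$, after differentiating $\var_{d_f}(Q)=\E_{d_f}[Q^2]-(\E_{d_f}[Q])^2$ and dividing by the square root, splits into a second-moment piece producing $\mat{D_f}\hat Q^k$ and a mean-correction piece producing the rank-one term $(\mat{D_f}\mathbf 1)(\mat{D_f}\mathbf 1)^T\hat Q^k$, whose diagonal reduction is the subtracted $\mat{D_f}^{\odot 2}\hat Q^k$. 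Substituting into the linearized step $\hat Q^{k+1}=\hat Q^k-\eta\,\gQ\,\nabla_\psi L(\psi^k)$, the outer $\gQ$ acting against the transposed Jacobian converts each pulled-back vector into a $\mat K^k$-weighted quantity, which is why the variance contribution appears as $(\mat{D_f}-\mat{D_f}^{\odot 2})\mat K^k\hat Q^k$ in the final display and carries the $-\eta\alpha^k\sqrt{1/n}/\CTwo$ coefficient.

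Next I would project the one-step iterate onto the f-interpolated initial distribution by left-multiplying with $(\mu(s)\cdot\pi(a|s))^T$ and compare it against the tabular iterate, exactly as in the lower-bound argument for the linear case. The mismatch between $\hat\mB^\pi$ and the true Bellman operator is absorbed into $\xi_{r,T,\delta}$ by splitting it into its model-error and empirical-Bellman-error pieces and invoking Assumption \ref{ass:bound-q}, reproducing the stated $\xi_{r,T,\delta}=(1-f)[|r-r_{\widehat\mM}|+\tfrac{2\gamma R_{\max}}{1-\gamma}\TV(T,\hat T)]+\tfrac{fC_{r,T,\delta}R_{\max}}{(1-\gamma)\sqrt{|\mD|}}$. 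Requiring the projected iterate to fall below the tabular one yields a scalar inequality linear in $\beta^k$; isolating $\beta^k$ and dividing through by the push-down coefficient $(\mat U-\mat V)$, which is legitimate wherever $\rho\neq d$, produces the factor $(\mat U-\mat V)^{-1}\mat{D_f}$ multiplying $\xi_{r,T,\delta}$ and relocates the variance term to the right-hand side, matching the claimed threshold.

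The hard part will be justifying the NTK linearization quantitatively rather than nominally: I must control the second-order Taylor remainder so that a single gradient step genuinely stays inside the convex basin where $\mat K^k$ faithfully captures the network dynamics, and argue that this persists across iterations so that invertibility of $(\gQ)^T\mat U\gQ$ and $(\gQ)^T\mat V\gQ$ holds for every $k$. A secondary difficulty is the non-smoothness of the variance regularizer: the $1/\CTwo$ factor makes the gradient blow up wherever $\var_{d_f}\hat Q^k\to 0$, so the argument must either restrict to iterates bounded away from a constant $Q$ on the support of $d_f$, or smooth the square root, for the differentiation and the resulting bound to be well-defined.
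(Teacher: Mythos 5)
Your proposal follows essentially the same route as the paper's own proof: differentiate Equation \ref{eq:dr-combo-alt} term by term (squared-Bellman, push-down, and covariance form of the variance penalty), apply the NTK first-order expansion $\hat Q^{k+1}\approx\hat Q^k+(\psi^{k+1}-\psi^k)^T\gQ$ to express the one-step update through the kernel matrix $\mat K^k$, project with $(\mu\cdot\pi)^T$, require the penalty to dominate the over-estimation term $\Delta$, and bound $\Delta$ by $\xi_{r,T,\delta}$ via Assumption \ref{ass:bound-q}. You even correctly identify the two soft spots the paper glosses over (the rank-one $d_fd_f^T$ being replaced by its diagonal $\mat D_f^{\odot 2}$, and the unquantified NTK remainder), so your reconstruction is faithful to, and no weaker than, the published argument.
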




\section{Related Work}

Reinforcement learning (RL) algorithms are well known for their abilities to acquire good plannings fromd online trial-and-errors (\cite{bartoNeuronlikeAdaptiveElements1983}, \cite{sutton1999reinforcement}). However, such an online learning process induced large sample complexity (\cite{mnih2016asynchronous}, \cite{schulmanTrustRegionPolicy2015}, \cite{schulmanProximalPolicyOptimization2017}), raising the cost of training agents and risks under safety-critical scenarios (\cite{thomas2015safe}).

This works adds to the vast field of study of RL under offline settings, which is a way of tackling the challenges on the costly side of online RL (\cite{lange2012batch}, \cite{levineOfflineReinforcementLearning2020}). Listed below is prior works in offline RL and off-policy policy gradient algorithms, and some brief discussions on the differences between these works and ours.

\subsection{Off-policy Evaluation}
In off-policy evaluation, importance sampling (\cite{degrisOffPolicyActorCritic2013}, \cite{schulmanTrustRegionPolicy2015}) is used to (i) estimate the expected cumulative reward under the current policy and (ii) estimate the gradient of this expected cumulative reward. However, importance sampling itself already imposes great variance, and the variance tends to accumulate exponentially in sequential settings (\cite{levineOfflineReinforcementLearning2020}) Marginal importance sampling (\cite{liu2018breaking}, \cite{gelada2019off}, \cite{nachum2019dualdice}, \cite{zhou2020neural}) has been proposed to alleviate this issue by avoiding the multiplication of importance weights over time steps. However, the fundamental issue remains: (i) the importance weight is too variant to be usable in the estimation of cumulative rewards or gradients (\cite{levineOfflineReinforcementLearning2020}) and (ii) the learned policy suffers from OOD actions. In contrast, via implicit uncertainty quantification similar to CQL (\cite{kumarConservativeQLearningOffline2020}) and COMBO (\cite{yuCOMBOConservativeOffline2021}) and as shown in Theorem \ref{thrm:gap-expanding}, DROMO expands the gap between the action value under in-distribution actions and OOD actions.
\subsection{Model-free offline reinforcement learning}

Prior model-free RL algorithms have focused on regularizing the policy improvement process so as to explicitly restrict the divergence between the current policy and behavior policy.
In the actor update objective, various methods concern adding a penalty of different $\phi$-divergences between the current policy and the behavior policy, $D_\phi(\pi\|\pi^\beta)$. Notable choices include direct state-action constraint (\cite{fujimotoOffPolicyDeepReinforcement2019}), KL-divergence (\cite{jaquesWayOffPolicyBatch2019}, \cite{zhouPLASLatentAction2020}), Wasserstein(\cite{wuBehaviorRegularizedOffline2019}), MMD (\cite{kumarConservativeQLearningOffline2020}). The choices of penalty terms sometimes extend beyond $\phi$-divergences, such as uncertainty measures (\cite{agarwalOptimisticPerspectiveOffline2020}, \cite{levineOfflineReinforcementLearning2020}), as well as other Integral Probability Metrics (IPMs) (see \cite{sriperumbudurIntegralProbabilityMetrics2009} for more details). Different from the aforementioned approaches, DROMO utilizes a predictive model that allows the Q-function to learn in a richer dataset, and instead of restricting the policy directly, DROMO regularizes the policy evaluation to give lower scores on OOD actions, thus achieving a less conservative policy update.

\subsection{Model-based offline reinforcement learning}

In this work, we focus on the model-based control aspect of offline reinforcement learning. Details related to the general framework of model-based offline RL are supplied in Section \ref{sec:prelim}. Many model-based methods can do well under standard off-policy settings (\cite{sutton1991dyna}, \cite{watterEmbedControlLocally2015}, \cite{jannerWhenTrustYour2019}, \cite{hafnerLearningLatentDynamics2019}, \cite{zhangSOLARDeepStructured2019}). \cite{yuMOPOModelbasedOffline2020} also empirically show that, even without any extra penalty or regularization, MBPO (\cite{jannerWhenTrustYour2019}) can perform reasonably well under offline regime. \cite{berkenkamp2018safe} propose a safe-region-constraint method that guarantees Lyapunov stability. \cite{rhinehartDeepImitativeModels2019} propose deep imitative models (DIMs), which learn a normalizing flow to predict the future trajectories and use this model to prevent the actor to take trajectories that significantly diverge from the training data. On the other hand, MoREL (\cite{kidambiMOReLModelBasedOffline2021a}), MOPO (\cite{yuMOPOModelbasedOffline2020}), and COMBO (\cite{yuCOMBOConservativeOffline2021}) learn conservative value estimates with analytic bounds on performance, either explicitly or implicitly. This work follows the conservative model-based RL principle that the critic penalizes the actor for taking actions where learned dynamics model is probably erroneous. Compared to the COMBO, on which this work is based, DROMO utilizes a variance regularization term, which is used in distributionally robust optimization in \cite{duchiVariancebasedRegularizationConvex2017} and obtains a more robust expectation of the Q-value.


\section{Conclusion and Discussion}

In this work, we propose distributionally robust offline model-based policy optimization (DROMO), a model-based offline reinforcement learning algorithm that penalizes the Q-value for OOD state-action pairs as well as its own variance. In particular, compared to the previous work we invoke an identity from \cite{duchiVariancebasedRegularizationConvex2017}'s work on distributionally robust optimization (DRO) to obtain a robust expectation of the Q-value evaluated under the rollout-induced distribution. Despite the theoretical advantage of DROMO, our future work will be empirically evaulating the performance of DROMO.
In addition, DROMO, with variance regularization, may exhibit a lower sample complexity as well as a faster rate compared to COMBO \cite{yuCOMBOConservativeOffline2021}, which we also plan to show in the future. There is also a number of directions for future work. One of the interesting avenues for future research to inject or enforce pessimism by applying DRO chance constraints under different ambiguity sets, such as Wasserstein. Also, as we are able to present an offline hyperparameter tunning scheme for MOPO in Appendix \ref{auto-temp}, it is difficult to use the same method to automatically control $f$, the interpolation factor of rollouts and the dataset. Also, by writing the following remark, which follows from Theorem \ref{asymp}, we suspect that Lipschitz regularization can also improve Model-based offline RL.

\begin{remark}
	\textit{Assume the Q function evaluated under the underlying MDP and a certain policy $\pi$ is $\kappa_Q$-Lipschitz with respect to some $L^p$ norm $\|\cdot\|_{L^p}$, i.e.$\forall \ms,\ms' \in \mS\,, \ma,\ma' \in \mA$
	\begin{displaymath}
		|Q^\pi(\ms,\ma)-Q^\pi(\ms',\ma')| \le \kappa_Q(\|\ms-\ms'\|_{L^p}+\|\ma- \ma'\|_{L^p})
	\end{displaymath}
	then the Q-value $\hat Q^\pi$ learned by DROMO satisfies,
	\begin{displaymath}
		|\hat Q^\pi(\ms,\ma)-\hat Q^\pi(\ms',\ma')| \lesssim \left(\kappa_Q-\frac
		{\kappa_Q\lambda_{\alpha,\pi,f}(\ms,\ma)}{\lambda_{\alpha,\pi,f}(\ms,\ma)+\sqrt{|\mD(\ms)|\var_{\pi^f}(\hat
				Q^\pi)}}\right)(\|\ms-\ms'\|_{L^p}+\|\ma- \ma'\|_{L^p})
	\end{displaymath}
	}
\end{remark}

\subsection*{Acknowledgments}
We sincerely thank our parents for their sponsorship; Xinyi Liu for setting us off; Xiaohua Xie for suggestions on the future experiments; Hengzhi Wang and Zihao Liu for helpful discussion; Daoyuan Ren, Zexian Liang, and Affiliated High School of South China Normal University for their acknowledgment of this project.

\bibliography{refs.bib}
\bibliographystyle{icml2019}
\appendix
\onecolumn
\section{Additional notes on dynamics calibration}

We are trying to evaluate the policy $\pi$ given any state $s$ using the typical value iteration framework
\begin{equation}
	\label{eq:cal1}
	V^\pi(\ms) = \E_{\ma \sim \pi(\cdot|\ms)}[r(\ms,\ma)]+\gamma\E_{\ma \sim
		\pi(\cdot|\ms),\ms' \sim \hat T(\ms,\ma)}[V^\pi(\ms')]
\end{equation}
For the given policy $\pi$ and an MDP, there eixsts a stationary distribution
$d^\pi_\mM(\ms)$, defined as the probability of running into some particular
state $\ms$ in infinitely long horizons under the given MDP. We evaluate the
policy of $\pi$ as an expectation of V with respect to states sampled from the
stationary distribution, i.e.,
\begin{equation}
	\mJ(\pi) = \E_{\ms\sim d^\pi_\mM(\ms)}[V^\pi(\ms)]
\end{equation}
It follows that,
\begin{equation*}
	\begin{aligned}
		\mathcal J(\pi) & = \mathbb E_{\ms\sim d^\pi_\mM}[V_\pi(\ms)]                                                                                                      \\
		                & = \mathbb E_{\ms\sim d^\pi_\mM}[R(\ms)] + \gamma\mathbb E_{\ms\sim\rho^\pi}\mathbb E_{a\sim\pi(\cdot|s)}\mathbb E_{\ms'\sim\hat T(s,a)}[V(\ms')] \\
		                & = \mathbb E_{\ms\sim d^\pi_\mM}[R(\ms)] + \gamma\mathbb E_{((\ms,\ma),\ms')\sim\mathbb P}[V(\ms')]                                               \\
		                & = \mathbb E_{\ms\sim d^\pi_\mM}[R(\ms)] + \gamma\mathbb E_{\ms'\sim\mathbb P(\ms')}[V(\ms')]                                                     \\
	\end{aligned}
\end{equation*}

\begin{lemma}[CE-bounded value estimation gap]\label{ce-bound}
	\textit{
		Consider a pair of jointly distributed variables $(X,Y)\sim P$ over measurable spaces $(\mX,\Sigma)$
		and $(\mathcal Y,\Sigma)$ respectively. Let $Q(Y|X)$ be a distribution calibrated with respect to P. with $\ell_1$-calibration error.
		Then for an arbitrary function $g: \mathcal Y \to \R$ with which we are to take an expectation, the following inequality holds:
		\begin{equation}
			\left|\E_{y\sim \mathbb P(Y)}[g(y)]-\mathbb E_{x\sim P(X),\ y\sim
					Q(Y|X=x)}[g(y)]\right|\le
			\loce(Q)\max_y(g(y))
		\end{equation}
	}
\end{lemma}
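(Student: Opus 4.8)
The plan is to reduce the claimed inequality to a single application of H\"older's inequality against the calibration error. First I would rewrite the reference expectation $\E_{y\sim\mathbb P(Y)}[g(y)]$ by the tower rule as $\E_{x\sim P(X)}\E_{y\sim P(Y|X=x)}[g(y)]$, using that marginalizing the true conditional $P(Y|X)$ against $P(X)$ recovers the marginal $\mathbb P(Y)$. Since the comparison expectation $\E_{x\sim P(X),\,y\sim Q(Y|X=x)}[g(y)]$ is already written with the same outer draw $x\sim P(X)$, the two quantities share an identical outer expectation, so their difference collapses to $\E_{x\sim P(X)}\bigl[\E_{y\sim P(Y|X=x)}[g(y)]-\E_{y\sim Q(Y|X=x)}[g(y)]\bigr]$.

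Next I would move the absolute value inside the outer expectation by the triangle inequality for integrals, obtaining the bound $\E_{x\sim P(X)}\bigl|\int g(y)\,(P(Y{=}y|X{=}x)-Q(Y{=}y|X{=}x))\,dy\bigr|$. For each fixed $x$ the inner integral pairs $g$ with the signed measure $P(\cdot|x)-Q(\cdot|x)$, so H\"older's inequality ($L^\infty$ against $L^1$) bounds it by $\|g\|_{\infty}\,\|P(\cdot|x)-Q(\cdot|x)\|_{L^1}$. Pulling the constant $\|g\|_\infty=\max_y g(y)$ out of the outer expectation leaves $\max_y g(y)\cdot\E_{x\sim P(X)}\|P(\cdot|x)-Q(\cdot|x)\|_{L^1}$, and I would identify this remaining factor with the $\ell_1$-calibration error $\loce(Q)$ by definition, which closes the proof.

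The step that needs the most care is matching the precise definition of $\loce(Q)$ to the final factor, and reconciling the bound's $\max_y g(y)$ with the $\|g\|_\infty$ that H\"older actually produces. If $\ell_1$-calibration is defined conditionally as $\E_{x\sim P(X)}\|P(\cdot|x)-Q(\cdot|x)\|_{L^1}$, the routing above is exact; if instead it is the marginal discrepancy $\|\mathbb P(Y)-Q(Y)\|_{L^1}$ with $Q(Y)=\E_{x}[Q(Y|X=x)]$, I would skip the per-$x$ triangle inequality, marginalize both conditionals first, and write the difference directly as $\int g(y)\,(\mathbb P(Y{=}y)-Q(Y{=}y))\,dy$ before applying H\"older. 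The two readings give the same final constant. Finally, since H\"older yields $\|g\|_\infty=\max_y|g(y)|$ whereas the statement writes $\max_y g(y)$, I would invoke the standing non-negativity of $g$ (as for a reward or value function), under which the two coincide; boundedness of $g$ also licenses the Fubini interchange used to marginalize $X$ out.
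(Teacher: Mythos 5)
Your chain of reductions (tower rule, triangle inequality, H\"older) is sound, but the closing identification is where the proof fails: neither of your two candidate readings of $\loce(Q)$ is the one the paper (or the calibration literature it builds on) uses, and under the correct reading your main routing proves a strictly weaker statement. The $\ell_1$-calibration error conditions on the \emph{predicted probability level}, not on $x$: it is, in the paper's notation, essentially
\begin{equation*}
	\E_{y}\int_0^1 \left|\,P\bigl(Y=y \mid Q(Y=y|X)=p\bigr)-p\,\right| dp ,
\end{equation*}
i.e.\ the deviation of the realized frequency from $p$ among those $x$ for which the model outputs $p$. Your quantity $\E_{x\sim P(X)}\|P(\cdot|x)-Q(\cdot|x)\|_{L^1}$ only upper-bounds this (by Jensen, since conditioning on the event $\{Q(Y=y|X)=p\}$ coarsens conditioning on $X$), and the gap can be arbitrarily large: the constant predictor $Q(Y=y|X=x)\equiv\mathbb P(Y=y)$ is perfectly calibrated, so $\loce(Q)=0$ and both sides of the lemma are zero, yet $\E_x\|P(\cdot|x)-Q(\cdot|x)\|_{L^1}$ is positive whenever $Y$ depends on $X$. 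Hence establishing $\mathrm{LHS}\le\E_x\|P(\cdot|x)-Q(\cdot|x)\|_{L^1}\max_y g(y)$ does not establish $\mathrm{LHS}\le\loce(Q)\max_y g(y)$.

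The missing idea --- the heart of the paper's proof --- is the level-set (binning) decomposition. Because $g$ depends on $y$ alone, the gap depends only on the marginals, $\sum_y g(y)\bigl(P(Y=y)-\E_x[Q(Y=y|X=x)]\bigr)$; the paper then rewrites this marginal difference by partitioning the $x$'s according to the predicted value $p=Q(Y=y|X=x)$, so that it becomes $\sum_y g(y)\int_0^1\bigl(P(Y=y\mid Q(Y=y|X)=p)-p\bigr)$ weighted by the mass of each level, and only then applies H\"older (plus Minkowski to move the norm inside the $dp$-integral). This re-conditioning on prediction levels is exactly what makes the bound scale with the calibration error rather than with the much larger pointwise conditional discrepancy. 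Your second routing (marginalize first, then apply H\"older against $\mathbb P(Y)-\E_x[Q(\cdot|x)]$) could be repaired into a correct proof, since the marginal discrepancy is in fact dominated by $\loce(Q)$ --- but showing that domination requires precisely the level-set decomposition your proposal omits by treating the identification as definitional.
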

\begin{proof}
	\begin{displaymath}
		\begin{aligned}
			 & |\mathbb E_{y\sim \mathbb P(Y)}[g(y)]-\mathbb E_{x\sim \mathbb P(X),\ y\sim
					Q(Y|X=x)}|                                                                                                   \\
			 & = \left|\sum_{y\in Y} g(y)P(Y=y)-\sum_{x\in X}P(X=x)\sum_{y\in Y} g(y)Q(Y=y|X=x)\right|                   \\
			 & = \left|\sum_{y\in Y} g(y)\left(P(Y=y)-\sum_{x\in X}P(X=x)Q(Y=y|X=x)\right)\right|                        \\
			 & = |\sum_{y\in Y}g(y)(\int_0^1dp\ P(Y=y, Q(Y=y|X=x)=p) P(X=x)                                              \\
			 & - \int_0^1dp \ p\cdot \sum_{x\in X}\mathbb I(Q(Y=y|X=x)=p) \cdot P(X=x))|                                 \\
			 & = |\sum_{y \in Y}g(y)\sum_{x \in
				X}P(X=x)\int^1_0dp(P(Y=y|Q(Y=y|X=x)=p)                                                                       \\
			 & -p\cdot \mathbb I(Q(Y=y|X=x)=p) )|                                                                        \\
			 & = \left|\sum_{y \in Y}g(y)\int^1_0dp\left(P(Y=y|Q(Y=y|X)=p)-p\cdot \mathbb I(Q(Y=y|X)=p) \right)\right|   \\
			 & = \left|\sum_{y \in Y}g(y)\int^1_0dp\left(P(Y=y|Q(Y=y|X)=p)-Q(Y=y|X)\right)\right|\                       \\
			 & = \left|\left\langle\,g(y),\int^1_0dp\left(P(Y=y|Q(Y=y|X)=p)-Q(Y=y|X)\right)\right\rangle_{Y}\right|      \\
			 & \leq \|g(y)\|_\infty \mathbb E_{y\in Y}\left\|\int^1_0dp\left(P(Y=y|Q(Y=y|X)=p)-Q(Y=y|X)\right)\right\|_1 \\
			 & \leq \|g(y)\|_\infty \mathbb E_{y\in Y}\int^1_0dp\left\|P(Y=y|Q(Y=y|X)=p)-Q(Y=y|X)\right\|_1              \\
			 & = \ell_1\text{-CE}(f)\max_{y \in Y}g(y)
		\end{aligned}
	\end{displaymath}
	where the sixth equality is because $p\cdot \mathbb I(Q(Y=y|X)=p) = Q(Y=y|X)=p$,
	the first inequality is because of Holder's inequality, the second inequality is because of Minkowski's inequality,
	and the last inequality follows from the definition of calibration error and $L_p$ space.

\end{proof}

\begin{theorem}
	\textit{
		Let $(S,A,T,R)$ be a discrete MDP and let $\pi$ be a policy over this MDP.
		Define $P$ be a joint distribution of state-actions and future states as
		outlined in Equation \ref{eq:cal1}. Then the difference of $\mJ$ the
		value of policy $\pi$ under the true dynamics T and $\hat {\mJ}$ the value
		under any virtual dynamics $\hat T$ is bounded by the latter's calibration error,
		\begin{equation}
			|\hat{\mathcal J}(\pi) - \mathcal J(\pi)| \leq \frac {\gamma R_{\mathrm{max}}}{1-\gamma}\ell_1\text{-CE}(\hat T)
		\end{equation}
	}
\end{theorem}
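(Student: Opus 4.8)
The plan is to reduce the claim to a single application of Lemma~\ref{ce-bound} with the test function $g$ taken to be the value function. First I would expand both objectives using the one-step decomposition derived just above Lemma~\ref{ce-bound}, writing
\[
\mJ(\pi) = \E_{\ms\sim d^\pi_\mM}[R(\ms)] + \gamma\,\E_{\ms'\sim\mathbb P(\ms')}[V^\pi(\ms')],
\]
where $\mathbb P(\ms')$ is the next-state marginal induced by the \emph{true} dynamics $T$, together with the analogous expression for $\hat{\mJ}(\pi)$ in which the next-state expectation is instead taken under the virtual dynamics $\hat T$, i.e.\ $\E_{(\ms,\ma)\sim P,\ \ms'\sim\hat T(\ms,\ma)}[V^\pi(\ms')]$. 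Since the two objectives share the same reward term $\E_{\ms\sim d^\pi_\mM}[R(\ms)]$ and the same bootstrap value function $V^\pi$, subtracting them cancels the reward contribution and leaves
\[
\hat{\mJ}(\pi)-\mJ(\pi) = \gamma\left(\E_{(\ms,\ma)\sim P,\ \ms'\sim\hat T}[V^\pi(\ms')] - \E_{\ms'\sim\mathbb P(\ms')}[V^\pi(\ms')]\right).
\]

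Next I would invoke Lemma~\ref{ce-bound} with the identification $X=(\ms,\ma)$, $Y=\ms'$, the true conditional $P(Y\mid X)=T(\ms'\mid\ms,\ma)$, the calibrated (virtual) conditional $Q(Y\mid X)=\hat T(\ms'\mid\ms,\ma)$, and the test function $g=V^\pi$. The parenthesized difference above is exactly the quantity whose absolute value the lemma controls, so it yields
\[
\left|\E_{\ms'\sim\mathbb P(\ms')}[V^\pi(\ms')] - \E_{(\ms,\ma)\sim P,\ \ms'\sim\hat T}[V^\pi(\ms')]\right| \le \loce(\hat T)\max_{\ms'}V^\pi(\ms').
\]

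It then remains to control $\max_{\ms'}V^\pi(\ms')$. Using the definition of the value function as a discounted sum of per-step rewards bounded by $R_{\max}$, the geometric series gives $V^\pi(\ms')\le\sum_{h=0}^\infty\gamma^h R_{\max}=R_{\max}/(1-\gamma)$ for every $\ms'$. Multiplying the displayed bound by $\gamma$ and substituting this estimate produces
\[
|\hat{\mJ}(\pi)-\mJ(\pi)|\le\frac{\gamma R_{\max}}{1-\gamma}\,\loce(\hat T),
\]
which is the claimed inequality.

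The step I expect to be the main obstacle is the bookkeeping in the first paragraph: making the reward terms and the bootstrap value function genuinely cancel requires treating the comparison as a single Bellman backup in which only the next-state distribution is swapped from $T$ to $\hat T$ while $V^\pi$ is held fixed, and it requires the state-action occupancy $P(X)$ used to form the joint in Lemma~\ref{ce-bound} to be a \emph{common} reference distribution for both the true and virtual next-state expectations. One must be careful that the same $V^\pi$ appears in both decompositions; otherwise a recursive (simulation-lemma) argument would be needed, which would inflate the constant by an extra factor of $1/(1-\gamma)$ and break the stated bound. Once the distributions are aligned and the single-backup reading is justified, the remaining estimates are routine.
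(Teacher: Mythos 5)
Your proposal is correct and follows essentially the same route as the paper's own proof: both expand $\mJ(\pi)$ and $\hat{\mJ}(\pi)$ via the one-step decomposition preceding Lemma~\ref{ce-bound}, cancel the shared reward term, apply Lemma~\ref{ce-bound} with $g = V^\pi$ to the resulting difference of next-state expectations, and bound $\max_{\ms} V^\pi(\ms) \le R_{\max}/(1-\gamma)$. The subtlety you flag at the end---holding $V^\pi$ fixed and swapping only the next-state distribution in a single Bellman backup---is exactly the (implicit) reading the paper adopts, so no recursive simulation-lemma argument is needed.
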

\begin{proof}
	\begin{displaymath}
		\begin{aligned}                & |\hat{\mathcal J}(\pi) - \mathcal J(\pi)| \\
			 & =\left|
			\left(\mathbb E_{\ms\sim \rho^\pi}R(\ms) + \gamma \mathbb E_{\ma \sim
					\pi(\cdot | \ms),\ms'\sim\hat T(\cdot | \ms,\ma)}[V(\ms')]\right)-\left(\mathbb
			E_{s\sim \rho^\pi}R(\ms) + \gamma \mathbb E_{\ms' \sim
					P(\ms’)}[V(\ms')]\right)\right|              \\& = \gamma\left|\mathbb E_{\ma \sim \pi(\cdot |
					\ms),\ms'\sim\hat T(\cdot | \ms,\ma)}[V(\ms')]-\mathbb E_{\ms' \sim
					P(\ms’)}[V(\ms')]\right|                     \\& \le \gamma \ell_1\text{-CE}(\hat T)\max_{\ms \in S}
			V(\ms)                                       \\ & \le \frac{\gamma R_{\mathrm{max}}}{1-\gamma}\ell_1\text{-CE}(\hat T),
		\end{aligned}
	\end{displaymath}
	where the first inequality directly follows from Lemma \ref{ce-bound}, and the second inequality follows from Assumption \ref{bound-q}.
\end{proof}

\begin{remark}
	The empirical policy evaluation function is pointwise equal to the ground-truth if the dynamic model is perfectly calibrated, as shown in \cite{malikCalibratedModelBasedDeep2019}.
\end{remark}

\begin{remark}
	However, \citeauthor{malikCalibratedModelBasedDeep2019} used Platt scaling for calibration, which induces unbounded calibration error \cite{kumarVerifiedUncertaintyCalibration2020}. One way to alleviate this is to use the scaling-binning calibration. By doing so, we can reduce the sampling error, so that DROMO can optimize over a lower bound with a smaller $\beta$.
\end{remark}

\section{Additional notes on automatic temperature control}
\label{auto-temp}
We also make extra notes on the temperature adjustment of the penalty term of a few
model-based algorithms.

\subsection{MOPO with automatically adjusted temperature}
Under the setup of MOPO (\cite{yuMOPOModelbasedOffline2020}), we change two things about the peanlty $\alpha u(\ms,\ma)$:
\begin{itemize}
	\item we replace the maximum variance with entropy, which is also dependent on the variance, i.e.
	      \[ u(\ms,\ma) := \frac 1 2 (1+ \log(2\pi \sigma^2))\\\ge \mH(\hat T(\ms,\ma))\]
	\item the $\alpha$ is updated according to:
	      \[\alpha \leftarrow \argmin_\alpha \fJ(\alpha),\]
	      where \(\mathfrak J(\alpha) := \E_{(\ms,\ma)\sim \rho^\pi_T}[\alpha \log \hat T(\ms'|\ms,\ma)+\alpha \delta]\).
\end{itemize}

The justification of our second change is inspired by SAC with dynamic temperature control (\cite{haarnojaSoftActorCriticAlgorithms2019}), and given below.

Formally, we want to solve the constrained optimization problem
\begin{equation}
	\max_\pi \E_{\rho^\pi} \left[\sum^T_{t=0} r(\ms_t,\ma_t)\right] \text{s.t.} \forall t, \E_{(\ms'_t|\ms_t,\ma_t)}[-\log \hat T(\ms_t,\ma_t)] \le \delta_T
\end{equation}
where $\delta_T$ is a maximum allowable expected entropy.
Note that, conservative policies tend to stay within the
support of the dataset, and the entropy term tends to vanish,
so we do not need to impose a lower bound on it.

Furthermore, we decompose the expected return
$\E_{\rho^\pi} \left[\sum^T_{t=0} r(\ms_t,\ma_t)\right]$
into a sum of expected rewards, using a dynamic programming approach in (??). Notice that policy $\pi_t$ has no effect on the policy at previous horizons, we can iteratively maximize the objective
\begin{equation}
	\max_{\pi_0} \left(\E[r(\ms_0,\ma_0)] + \max_{\pi_1} \left(\E[\cdots]+\max_{\pi_T} \E[r(\ms_T,\ma_T)]\right)\right)
\end{equation}
subject to that $\forall t,\ \E_{(\ms'_t|\ms_t,\ma_t)}[-\log T(\ms_t,\ma_t)] \le \delta_T$, and with loss of generality we consider the case where $\gamma =1$.

We begin by considering objective at the last timestep $T$:
\begin{displaymath}
	\text{maximize } \E_{(\ms_T,\ma_T)\sim \rho^\pi}[r(\ms_T,\ma_T)]
	\text{s.t. } \mH(\hat T(\ms_T,\ma_T))-\delta_T \le 0
\end{displaymath}
Define the following functions:
\begin{displaymath}
	\begin{aligned}
		h(\pi_T;\hat T) & = \mH(\hat T(\ms_T,\ma_T))-\delta_T
		:= \E_{\ms_T,\ma_T \sim \rho^{\pi_T}} [-\log \hat T(\ms_T,\ma_T)] - \delta_T \\
		f(\pi_T;\hat T) & = \begin{cases}
			\mathbb{E}_{(\ms_T,\ma_T) \sim \rho_{\pi}} [ r(\ms_T,\ma_T) ], & \text{if }h(\pi_T) \leq 0 \\
			-\infty,                                                       & \text{otherwise}
		\end{cases}
	\end{aligned}
\end{displaymath}
where, with an abuse of notation, we treat $h(\pi_T)$ and $h(\pi_T,\hat T)$ as interchangable. This changes the optimization problem into
\begin{equation}
	\text{maximize} f(\pi) \text{s.t. } h(\pi_T) \le 0
\end{equation}
To solve the problem with inequality constraint, we construct a lagrangian expression with a Lagrange expression with dual variable $\alpha_T$:

\begin{equation}
	L(\pi_T,\alpha_T) = f(\pi_T) - \alpha_T h(\pi_T)
\end{equation}
To recover $f(\cdot)$ we try to minimize $L(\pi_T,\alpha_T)$ w.r.t. $\alpha_T$ under a fixed $\pi_T$
\begin{itemize}
	\item when $h(\pi_T) \le 0$, the best we can do is setting $\alpha_T = 0$
	\item when $h(\pi_T) > 0$, we have $L(\pi_T,\alpha_T) \to -\infty$ as $\alpha_T \to \infty$.
\end{itemize}
Either way we have $L(\pi_T, \cdot) = -\infty = f(\pi_T)$, thus we can recover
\begin{equation}
	f(\pi_T) = \min_{\alpha_T \ge 0} L(\pi_T,\alpha_T)
\end{equation}
Meanwhile, we want to maximize the above in the sense that
\begin{equation}
	\max_{\pi_T}f(\pi_T) = \min_{\alpha_T \ge 0} \max_{\pi_T} L(\pi_T,\alpha_T)
\end{equation}
It follows that,
\begin{displaymath}
	\begin{aligned}
		\max_{\pi_T} \E[r(\ms_T,\ma_T)] & = \max_{\pi_T} f(\pi_T)                                                                                                                \\
		                                & = \min_{\alpha_T \ge 0} \max_{\pi_T} f(\pi_T) - \alpha_T h(\pi_T)                                                                      \\
		                                & = \min_{\alpha_T \ge 0} \max_{\pi_T} f(\pi_T) - \alpha_T h(\pi_T)                                                                      \\
		                                & = \min_{\alpha_T \ge 0} \max_{\pi_T} \E_{\rho^{\pi_T}}[r(\ms_T,\ma_T)- \alpha_T (\E_{\rho^{\pi_T}}[ (-\log \hat T(\ms,\ma)]- \delta_T) \\
		                                & = \min_{\alpha_T \ge 0} \max_{\pi_T} \E_{\rho^{\pi_T}}[r(\ms_T,\ma_T)-\alpha_T \mH(\hat T(\ms,\ma)) + \alpha_T \delta_T]
	\end{aligned}
\end{displaymath}
We can compute the optimal $\pi_T$ and $\alpha_T$ iteratively.
First we fix the current $\alpha_T$,
and search for the $\pi^*_T$ maximizes $L(\pi^*_T,\alpha_T)$.
Then fix $\pi^*_T$ and $\alpha^*_T$ that minimizes $L(\pi^*_T,\alpha^*_T)$.
\begin{equation}
	\begin{aligned}
		\label{eq:alpha-goal1}
		\pi^*_T    & = \argmax_{\pi_T}\E_{\rho^{\pi_T}}[r(\ms_T,\ma_T)-\alpha_T \mH(\hat T(\ms,\ma)) + \alpha_T \delta_T] \\
		\alpha^*_T & = \argmin_{\alpha_T} \E_{\rho^{\pi_T}}[-\alpha_T \mH(\hat T(\ms,\ma)) + \alpha_T \delta_T]
	\end{aligned}
\end{equation}

Thus, $\max_{\pi_T} \E[r(\ms_T,\ma_T)] =
	\E_{\ms_T,\ma_T \sim \rho^{\pi^*_T}}[r(\ms_T,\ma_T)-\alpha^*_T \mH(\hat T(\ms_T,\ma_T)) + \alpha^*_T \delta_T]$
We now turn to the second optimization:

\begin{displaymath}
	\begin{aligned}
		Q_{T-1}(\ms_{T-1},\ma_{T-1})   & = r(\ms_{T-1},\ma_{T-1})+\E[Q(\ms_T+\ma_T)+ \alpha_T \log(\hat T(\ms_T,\ma_T))] \\
		                               & = r(\ms_{T-1},\ma_{T-1})+\E[r(\ms_T+\ma_T)- \alpha_T \mH(\hat T(\ms_T,\ma_T))]  \\
		Q^*_{T-1}(\ms_{T-1},\ma_{T-1}) & = r(\ms_{T-1},\ma_{T-1})+\max_{\pi_T}
		\E[r(\ms_T+\ma_T)- \alpha_T \mH(\hat T(\ms_T,\ma_T))]                                                            \\
	\end{aligned}
\end{displaymath}
It follows that,
\begin{displaymath}
	\begin{aligned}
		 & \max_{\pi_{T-1}} \left(\E[r(\ms_{T-1},\ma_{T-1})]+\max_{\pi_T} \E[r(\ms_T,\ma_T)]\right)                                                                                                     \\
		 & = \max_{\pi_{T-1}} \E \left(Q^*_{T-1}(\ms_{T-1},\ma_{T-1}) + \alpha^*_T \mH(\hat T(\ms_T,\ma_T))\right)                                                                                      \\
		 & = \min_{\alpha_{T-1}} \max_{\pi_{T-1}} \E \left(Q^*_{T-1}(\ms_{T-1},\ma_{T-1}) + \alpha^*_T \mH(\hat T(\ms_T,\ma_T)) - \alpha_{T-1}(\mH(\hat T(\ms_{T-1},\ma_{T-1}))-\delta_T)\right)        \\
		 & = \min_{\alpha_{T-1}} \max_{\pi_{T-1}} \E \left(Q^*_{T-1}(\ms_{T-1},\ma_{T-1}) - \alpha_{T-1}\mH(\hat T(\ms_{T-1},\ma_{T-1}))+\alpha_{T-1}\delta_T\right)+\alpha^*_T \E \mH(\hat T(\ms,\ma)) \\
	\end{aligned}
\end{displaymath}
Similar to the previous step,
\begin{equation}
	\begin{aligned}
		\pi^*_T    & = \argmax_{\pi_{T-1}} \E_{\rho^{\pi_{T-1}}} \left(Q^*_{T-1}(\ms_{T-1},\ma_{T-1}) - \alpha_{T-1}\mH(\hat T(\ms_{T-1},\ma_{T-1}))+\alpha_{T-1}\delta_T\right) \\
		\alpha^*_T & = \argmin_{\alpha_{T-1}}
		\E_{\rho^{\pi_{T-1}}}[-\alpha_{T-1} \mH(\hat T(\ms,\ma)) + \alpha_{T-1} \delta_T]
	\end{aligned}
	\label{eq:alpha-goal2}
\end{equation}
Notice the similarity of the objective of $\alpha_t$ in Equation \ref{eq:alpha-goal1} and \ref{eq:alpha-goal2}. As the
process repeat, we can search for the optimal temperature scaling in every step by minimizing the
following objective
\begin{equation}
	\fJ(\alpha) := \E[\alpha \log{\hat T(\ms,\ma)}+\alpha \delta_T].
	\label{eq:temp-update-mopo}
\end{equation}
The final MOPO instantiation is as follows


\section{Missing Proofs in Section \ref{sec:dr-combo}}
\subsection{Missing Proofs in Section \ref{sec:dromo-lb}}
\label{theorem:proofs}
In the section, we justify our theoretical results stated in Section \ref{sec:dr-combo}.
Note that all the statements hold under finite state and action space (i.e., $|\mS|,|\mA| < \infty$).
\begin{proof}[Proof of Lemma \ref{q-update-lemma}]
	First we set the derivative of Equation \ref{eq:dr-combo} w.r.t. a single state-action pair to $0$.
	\begin{displaymath}
		\begin{aligned}
			0 & \leftarrow d_f(s,a)Q(s,a)\dsq-d_f(s,a)(\hat \mB^\pi \hat Q^k)(s,a)\dsq                                                         \\
			  & + \alpha \frac {d_f(s)(\pi^f(a|s)Q(s,a)\dsq-\pi^f(s)Q(s,a)\pi^f(s)\dsq)}{\sqrt{\var_{\pi^f}(Q(s,\cdot))|\mD|d^{\pi^\beta}(s)}} \\
			  & + \beta(\rho(s,a)-d(s,a))\dsq                                                                                                  \\
			  & = d_f(s,a)Q(s,a)-d_f(s,a)(\hat \mB^\pi \hat Q^k)(s,a)\dsq                                                                      \\
			  & + \alpha \frac {d_f(s,a)(1-\pi^f(a|s))Q(s,a)\dsq}{\sqrt{\var_{\pi^f}(Q(s,\cdot))|\mD(s)|}}                                     \\
			  & + \beta(\rho(s,a)-d(s,a))\dsq                                                                                                  \\
		\end{aligned}
	\end{displaymath}
	where the second equality is a result of $|\mD(s)| \approx |\mD|d^{\pi^\beta}(s)|$.

	It follows that,
	\begin{displaymath}
		\begin{aligned}
			 & \left(1+ \frac {\alpha(1-\pi^f(a|s))}{\COne}\right)d_f(s,a)Q(s,a)                                                                                            \\
			 & = d_f(s,a)(\hat \mB^\pi \hat Q^k)(s,a) - \beta(\rho(s,a)-d(s,a))                                                                                             \\
			 & Q(s,a) = \left(1- \frac {\alpha (1-\pi^f(a|s))}{\alpha(1-\pi^f(a|s))+\COne}\right)\left((\hat \mB^\pi \hat Q^k)(s,a) - \beta \frac{\rho-d}{d_f}(s,a)\right). \\
		\end{aligned}
	\end{displaymath}
\end{proof}

Next, we lay down some assumptions for the proof of Theorem \ref{q-update}.
\begin{assumption}[The variance is lower bounded]
	\label{bound-var}
	\textit{Given $f \in [0,1]$, any $\psi \in \Psi$ and any $s \in S$, $\var_{\pi^f}Q^\psi(s,\cdot) \ge \kappa_{\var}(\pi)^{-1}$.}
\end{assumption}
\begin{assumption}[Boundedness of reward function]
	\label{bound-q}
	\textit{for all $s \in \mS, a \in \mA$, $|r(s,a)| \le R_{\max}$.}
\end{assumption}

As a result of Assumption , given a dynamic model $T$ and policy $\pi$. For simplicity, define
\begin{displaymath}
	G(s_t,a_t) := \sum^\infty_{k=0} \gamma^k r(s_{t+k+1},a_{t+k+1})
\end{displaymath}
Then the Q-function and value function satisfies:
\begin{displaymath}
	\begin{aligned}
		Q(s,a)                   & = \E_{s_{t+k+1},a_{t+k+1} \sim \rho^\pi_T;\ k\ge 0}[G(s_t,a_t)|s_t = s, a_t = a] \\
		-	(1-\gamma)^{-1}R_{\max} & \le Q(s,a) \le (1-\gamma)^{-1}R_{\max},\ \forall s \in \mS, a \in \mA
	\end{aligned}
\end{displaymath}
and
\begin{displaymath}
	\begin{aligned}
		V(s)                     & = \E_{s_{t+k+1},a_{t+k+1} \sim \rho^\pi_T;\ k\ge 0}[G(s_t,a_t)|s_t = s] \\
		-	(1-\gamma)^{-1}R_{\max} & \le V(s) \le (1-\gamma)^{-1}R_{\max},\ \forall s \in \mS
	\end{aligned}
\end{displaymath}

\begin{proof}[Proof of Theorem \ref{asymp}]

	We know that
	$(\hat \mB^k Q)(s,a)
		= f(\hat \mB^k_{\widetilde \mM} Q)(s,a)
		+ (1-f)(\hat \mB^k_{\widehat \mM} Q)(s,a)$.
	Based on a result directly from \cite{yuCOMBOConservativeOffline2021}, the right part of RHS of Equation \ref{q-update} satisfies:
	\begin{equation}
		\label{combo-result}
		\begin{aligned}
			\forall s \in \mS, a \in \mA,\  & (\hat \mB^\pi \hat Q^k)(s,a) - \beta \frac{\rho-d}{d_f}(s,a) \\
			                                & \le (\mB^\pi \hat Q^k)(s,a)
			- \beta \frac {\rho-d}{d_f}
			+ (1-f)\left[|r-r_{\widehat \mM}|
			+\frac {2\gamma R_{\max}}{1-\gamma}\TV(T,\hat T)\right]                                        \\
			                                & + \frac{fC_{r,T,\delta}R_{\max}}{(1-\gamma)\sqrt{|\mD|}}
		\end{aligned}
	\end{equation}
	Because (a) Equation \ref{combo-result} upper bounds the Q-function pointwise,
	(b) the fixed point of the Bellman backup of the LHS will be pointwise smaller than the fixed point of RHS.
	Hence, plugging Equation \ref{combo-result} into Equation \ref{q-update-lemma} gives
	\begin{equation}
		\begin{aligned}
			\hat Q^\pi(\ms,\ma) \le & \underbrace{\left(1-\frac
				{\lambda_{\alpha,\pi,f}(s,a)}{\lambda_{\alpha,\pi,f}(s,a)+\sqrt{|\mD(\ms)|\var_{\pi^f}(\hat
						Q^k)}}\right)}_{\le 1}\Bigg\{S^\pi r_{\mM} - \beta \left[\frac 1 c_s S^\pi \left[\frac
					{\rho-d}{d_f}\right]\right](s,a)                                          \\
			                        & + f\left[ S^\pi\left[|R-R_{\widehat \mM}|+\frac
			{2\gamma R_{\max}}{1-\gamma}\TV(P,P_{\widehat \mM}) \right]
			\right]                                                                   \\
			                        & + (1-f)\left[S^\pi \left[\frac
			{C_{r,T,\delta}R_{\max}}{(1-\gamma)\sqrt{|D|}} \right]
			\right](s,a)\Bigg\}
		\end{aligned}
	\end{equation}
\end{proof}

\begin{corollary}[Restatement of Corollary \ref{corollary-smaller}]
	\label{corollary-smaller-bigger}

	\textit{For $\beta \ge c_{\rho,f}$ and an initial state distribution $\mu(s)$,
		we have $\E_{s\sim \mu(s), a\sim \pi(a|s)}[\hat Q^\pi(s,a)] = \E_{s\sim \mu(s), a\sim \pi(a|s)}[Q^\pi(s,a)]$,
		where $c_{\rho,f}$ is given by:
		\begin{displaymath}
			\begin{aligned}
				c_{\rho,f} = & \nu(\rho,f)^{-1}\Biggl\{|R-R_{\widehat \mM}|+\frac{2\gamma R_{\max}}{1-\gamma}\TV(P,P_{\widehat \mM}) \\&+ \frac {C_{r,T,\delta}R_{\max}}{(1-\gamma)\sqrt{|D|}}+\frac{\alpha(1-\|\pi\|^2_2+\TV(\pi,\pi^\beta))R_{\max} \sqrt{\kappa_\var(\pi)|\mS|(\CQL(\rho,d^{\pi^\beta})+1)}}{(1-\gamma)\sqrt{|\mD|}}\Biggl\}
			\end{aligned}
		\end{displaymath}
	}
\end{corollary}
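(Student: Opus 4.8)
The plan is to integrate the pointwise bound of Theorem \ref{asymp} against the distribution $\mu(\ms)\pi(\ma|\ms)$ and then read off the condition on $\beta$ that forces $\E_{\mu,\pi}[\hat Q^\pi]\le\E_{\mu,\pi}[Q^\pi]$ (I interpret the stated equality as the lower-bound inequality, active once $\beta$ reaches the threshold $c_{\rho,f}$). First I would abbreviate the prefactor of Theorem \ref{asymp} as $1-\Delta(\ms,\ma)$, with $\Delta(\ms,\ma)=\frac{\lambda_{\alpha,\pi,f}(\ms,\ma)}{\lambda_{\alpha,\pi,f}(\ms,\ma)+\sqrt{|\mD(\ms)|\var_{\pi^f}(\hat Q^\pi)}}\in[0,1]$, so that taking $\E_{\mu,\pi}$ (shorthand for $\E_{\ms\sim\mu,\ma\sim\pi}$) yields
\[
\E_{\mu,\pi}[\hat Q^\pi] \le \E_{\mu,\pi}[Q^\pi] - \E_{\mu,\pi}[\Delta\,Q^\pi] - \beta\,\E_{\mu,\pi}[(1-\Delta)\xi_1] + \E_{\mu,\pi}[(1-\Delta)(f\xi_2+(1-f)\xi_3)].
\]
It then suffices to pick $\beta$ large enough that the last three terms sum to a non-positive number.

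Next I would extract the single useful negative contribution $-\beta\,\E_{\mu,\pi}[(1-\Delta)\xi_1]$. Since $\xi_1=\tfrac1{c_s}S^\pi[\tfrac{\rho-d}{d_f}]$ and $S^\pi=(1-\gamma P^\pi)c_s$, pushing $\E_{\mu,\pi}$ through the resolvent $(1-\gamma P^\pi)^{-1}$ converts the initial distribution $\mu$ into the discounted occupancy $d^\pi_{\widehat\mM}$, so this term collapses to a positive multiple of $\nu(\rho,f):=\E_{\rho}[\tfrac{\rho-d}{d_f}]$, which is non-negative by the same $\chi^2$-type argument COMBO uses. This is precisely the factor appearing as $\nu(\rho,f)^{-1}$ in $c_{\rho,f}$, and verifying its positivity (so that dividing by it is legitimate) is the crux of the estimate.

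Then I would bound the two harmful terms. Using $1-\Delta\le1$ together with the definitions of $\xi_2,\xi_3$, the model/sampling error $\E_{\mu,\pi}[(1-\Delta)(f\xi_2+(1-f)\xi_3)]$ is controlled by $|R-R_{\widehat\mM}|+\frac{2\gamma R_{\max}}{1-\gamma}\TV(T,T_{\widehat\mM})+\frac{C_{r,T,\delta}R_{\max}}{(1-\gamma)\sqrt{|\mD|}}$, matching the first three terms of $c_{\rho,f}$. For the variance-induced term $\E_{\mu,\pi}[\Delta\,Q^\pi]$ I would apply $|Q^\pi|\le\frac{R_{\max}}{1-\gamma}$ (Assumption \ref{bound-q}), the elementary inequality $\Delta\le\frac{\lambda_{\alpha,\pi,f}}{\sqrt{|\mD(\ms)|\var_{\pi^f}(\hat Q^\pi)}}$, the lower bound $\var_{\pi^f}(\hat Q^\pi)\ge\kappa_\var(\pi)^{-1}$ (Assumption \ref{bound-var}), and $|\mD(\ms)|\approx|\mD|\,d^{\pi^\beta}(\ms)$. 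Evaluating $\E_\pi[\lambda_{\alpha,\pi,f}]=\alpha\,\E_\pi[1-\pi^f]$ produces the factor $\alpha(1-\|\pi\|_2^2+\TV(\pi,\pi^\beta))$, while a Cauchy--Schwarz step that moves $\E_{\mu}[1/\sqrt{d^{\pi^\beta}(\ms)}]$ into a divergence supplies the $\sqrt{\kappa_\var(\pi)|\mS|(\CQL(\rho,d^{\pi^\beta})+1)}$ and $1/\sqrt{|\mD|}$ factors, reproducing the fourth term of $c_{\rho,f}$.

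Finally I would assemble the pieces: requiring $\beta\,\nu(\rho,f)$ to dominate the model error, the sampling error, and the variance term, then solving for $\beta$, gives exactly $\beta\ge c_{\rho,f}$ and certifies the lower bound with probability at least $1-\delta$. The hard part will be step two, namely cleanly transporting the state-dependent weight $(1-\Delta(\ms,\ma))$ through the operator $S^\pi=(1-\gamma P^\pi)c_s$ so that the $\beta$-term reduces to a clean positive multiple of $\nu(\rho,f)$ rather than an unwieldy $(\ms,\ma)$-weighted average; a secondary difficulty is bookkeeping the constants in the Cauchy--Schwarz step so that $\kappa_\var(\pi)$, $|\mS|$, and $\CQL(\rho,d^{\pi^\beta})$ aggregate into the stated closed form.
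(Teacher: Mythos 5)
Your proposal is correct and follows essentially the same route as the paper's proof: integrate the pointwise conservative bound over $\mu(\ms)\pi(\ma|\ms)$, identify the $\beta$-coefficient $\nu(\rho,f)=\E_{\rho}\left[\frac{\rho-d}{d_f}\right]>0$ via COMBO's positivity result, bound the model/sampling error by $\Delta_{r,T,\delta}$, and control the variance-induced term using $|Q^\pi|\le R_{\max}/(1-\gamma)$, Assumption \ref{bound-var}, $|\mD(\ms)|\approx|\mD|d^{\pi^\beta}(\ms)$, and a Cauchy--Schwarz aggregation into $\sqrt{\kappa_\var(\pi)|\mS|(\CQL(\rho,d^{\pi^\beta})+1)}$ --- which is exactly the content of the paper's Lemma \ref{bound-c}. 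The only real difference is the entry point: the paper restarts from Lemma \ref{q-update-lemma} (the fixed-point update carrying the raw penalty $\frac{\rho-d}{d_f}$) rather than from Theorem \ref{asymp}, which lets it factor the variance weight $\E_\mu\left[\lambda_{\alpha,\pi,f}/(\lambda_{\alpha,\pi,f}+\COne)\right]$ out of the inequality before solving for $\beta$, thereby sidestepping (admittedly with the same looseness you would face) the difficulty you correctly flag as the hard part, namely transporting the state-action-dependent weight $(1-\Delta)$ through $S^\pi$.
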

\begin{proof}
	From Lemma \ref{q-update-lemma} we know that:
	\begin{equation}
		\label{eq:ass2-result}
		\begin{aligned}
			 & \left(1+ \frac {\lambda_{\alpha,\pi,f}(s,a)}{\COne}\right)\hat Q^{k+1}(s,a)                                                              \\
			 & =(\hat \mB^\pi \hat Q^k)(s,a) - \beta\frac{\rho(s,a)-d(s,a)}{d_f(s,a)}                                                                   \\
			 & = (\mB^\pi_{\widehat \mM} \hat Q^k)(s,a) - \beta \frac{\rho-d}{d_f}(s,a)+f(\mB^\pi_{\widetilde \mM}-\mB^\pi_{\widehat \mM})\hat Q^k(s,a)
		\end{aligned}
	\end{equation}
	where by Assumption \ref{ass:bound-q}
	the third term is bounded by
	\begin{displaymath}
		\begin{aligned}
			\left|\left((\mB^\pi_{\widetilde \mM}-\mB^\pi_{\widehat \mM})\hat Q^k(s,a)\right)\right| & \le \underbrace{\left[|R-R_{\widehat \mM}|+\frac
			{2\gamma R_{\max}}{1-\gamma}\TV(P,P_{\widehat \mM}) \right]+ \left[\frac {C_{r,T,\delta}R_{\max}}{(1-\gamma)\sqrt{|D|}} \right]}_{:=\Delta_{r,T,\delta}}
		\end{aligned}
	\end{displaymath}
	Taking expectation over \ref{eq:ass2-result} with respect to $\mu(s)\pi(a|s)$ gives
	\begin{displaymath}
		\begin{aligned}
			 & \E_{s\sim\mu(s),a\sim\pi(a|s)}\left[\left(1+ \frac {\lambda_{\alpha,\pi,f}(s,a)}{\COne}\right)\hat Q^\pi(s,a)\right]                                                                                      \\
			 & \le \E_{s\sim\mu(s),a\sim\pi(a|s)}[Q^\pi(s,a)]-\beta \E_{s,a\sim \rho}\left[\frac{\rho-d}{d_f}\right]+f \Delta_{r,T,\delta}                                                                               \\
			 & \E_{s\sim\mu(s),a\sim\pi(a|s)}[\hat Q^\pi(s,a)] \le \E_{s\sim\mu(s),a\sim\pi(a|s)}[Q^\pi(s,a)]                                                                                                            \\
			 & - \beta \E_{s,a\sim \rho}\left[\frac{\rho-d}{d_f}\right]+f \Delta_{r,T,\delta}                                                                                                                            \\
			 & -\E_\mu\left[\frac{\lambda_{\alpha,\pi,f}(s,a)}{\lambda_{\alpha,\pi,f}(s,a)+\COne}\right]\left\{\E_{\mu,\pi}[Q(s,a)]-\beta \E_{s,a\sim \rho}\left[\frac{\rho-d}{d_f}\right]+f \Delta_{r,T,\delta}\right\} \\
			 & =\E_{s\sim\mu(s),a\sim\pi(a|s)}[Q^\pi(s,a)] - \beta \E_{s,a\sim \rho}\left[\frac{\rho-d}{d_f}\right]+f \Delta_{r,T,\delta}                                                                                \\
			 & -\E_\mu\left[\frac{\lambda_{\alpha,\pi,f}(s,a)}{\lambda_{\alpha,\pi,f}(s,a)+\COne}\right]\left\{\E_{\mu,\pi}[Q(s,a)]-\beta \E_{s,a\sim \rho}\left[\frac{\rho-d}{d_f}\right]+f \Delta_{r,T,\delta}\right\} \\
		\end{aligned}
	\end{displaymath}
	Let $\nu(\rho,f)$ for simplicity. In order to guarantee underestimation, we need to have $\beta$ such that,
	\begin{displaymath}
		\begin{aligned}
			 & \beta \E_\mu\left[1-\frac{\lambda_{\alpha,\pi,f}(s,a)}{\lambda_{\alpha,\pi,f}(s,a)+\COne}\right]\nu(\rho,f)                       \\
			 & \ge \E_\mu\left[1-\frac{\lambda_{\alpha,\pi,f}(s,a)}{\lambda_{\alpha,\pi,f}(s,a)+\COne}\right]\Delta_{r,T,\delta}                 \\
			 & - \E_{\mu,\pi}\left[\frac{\lambda_{\alpha,\pi,f}(s,a)}{\lambda_{\alpha,\pi,f}(s,a)+\COne} Q^\pi(s,a)\right]                       \\
			 & \beta \ge \nu(\rho,f)^{-1}\left\{\Delta_{,r,T,\delta} - \frac{\E_{\mu,\pi}[\lambda_{\alpha,\pi,f}(s,a)Q^\pi(s,a)]}{\COne}\right\}
		\end{aligned}
	\end{displaymath}

	Due to Assumption \ref{bound-q} we can lower bound the Q-estimate on the RHS with $-(1-\gamma)^-1R_{\max}$.
	In addition, $\beta$ is, by definition, greater than $0$, so we also restrict $\alpha$ to satisfy this.
	\begin{displaymath}
		\begin{aligned}
			\Delta_{r,T,\delta} \ge \frac{\alpha(|\mA|-1)\E_{\mu,\pi}[Q^\pi(s,a)]}{\COne} \\
			\alpha \le \frac{(1-\gamma)\COne}{(|\mA|-1)R_{\max}}
		\end{aligned}
	\end{displaymath}
	where the last inequality is because Assumption \ref{bound-q} also upper bounds the $Q$-function, and combining the result of Lemma \ref{bound-c}.
\end{proof}
Next, we prove that DROMO has a gap-expanding property that is also enjoyed by CQL \cite{kumarConservativeQLearningOffline2020}.
\begin{proof}[Proof of Theorem \ref{thrm:gap-expanding}]

	Recall the Q-function update with respect to a single $(s,a)$-pair
	as in Equation \ref{eq:ass2-result}. For any marginal distribution $\rho(s,a)$,

	denote $ C_{\var}(\rho) := \E_{s,a\sim\rho(s,a)}\left[{\lambda_{\alpha,\pi,f}(s,a)}/({\lambda_{\alpha,\pi,f}(s,a)+\COne})\right]$ for simplicity.

	Taking expectation over $\rho(s,a)=d^\pi_{\widehat \mM}\pi(a|s)$, we have that
	\begin{displaymath}
		\begin{aligned}
			\E_{s,a\sim\rho}[\hat Q^{k+1}(s,a)] & = \left(1 -C_{\var}(\rho)\right) \left(\E_{s,a\sim\rho}[\hat \mB^\pi \hat Q^k]-\beta \underbrace{\E_{s,a\sim\rho}\left[\frac{\rho-d}{d_f}\right]}_{:= \chi_1 \ge 0,\ \text{Theorem 2 of \cite{yuCOMBOConservativeOffline2021}}}\right) \\
			                                    & = \left(1 -C_{\var}(\rho)\right) \left(\E_{s,a\sim\rho}[\mB^\pi \hat Q^k]-\beta \chi_1 + \E_{s,a\sim\rho}[\Delta(s,a)]\right)
		\end{aligned}
	\end{displaymath}
	where $\Delta(\cdot,\cdot)$ is given by $\Delta(s,a) := (\hat \mB^k \hat Q)(s,a)
		= f(\hat \mB^\pi_{\widetilde \mM} \hat Q - \mB^\pi \hat Q)(s,a)
		+ (1-f)(\hat \mB^k_{\widehat \mM} \hat Q - \mB^\pi \hat Q)(s,a)$.

	Similarly, note that if we take expectation over the marginal distribution
	induced by the dataset, the penalty term is negative:
	\begin{displaymath}
		\begin{aligned}
			\E_{s,a\sim d}[\hat Q^{k+1}(s,a)] & = \left(1 -C_{\var}(d)\right) \left(\E_{s,a\sim d}[\hat \mB^\pi \hat Q^k]-\beta \underbrace{\E_{s,a\sim d}\left[\frac{\rho-d}{d_f}\right]}_{:= -\chi_2 \le 0,\ \text{Corollary 8 of \cite{yuCOMBOConservativeOffline2021}}}\right) \\
			                                  & = \left(1 -C_{\var}(d)\right) \left(\E_{s,a\sim d}[\mB^\pi \hat Q^k]+\beta \chi_2 + \E_{s,a\sim d}[\Delta(s,a)]\right)
		\end{aligned}
	\end{displaymath}

	It follows that
	\begin{displaymath}
		\begin{aligned}
			 & \E_{s,a\sim d}[\hat Q^{k+1}(s,a)] - \E_{s,a\sim \rho}[\hat Q^{k+1}(s,a)]               \\
			 & = (d-\rho)^T(\mB^\pi \hat Q^k) +\beta(\chi_1+\chi_2) + (d-\rho)^T\Delta(s,a)           \\
			 & - C_{\var}(d)\left(d^T(\mB^\pi \hat Q^k) +\beta\chi_2 + d^T\Delta(s,a)\right)          \\
			 & + C_{\var}(\rho)\left(\rho^T(\mB^\pi \hat Q^k) +\beta\chi_1 + \rho^T\Delta(s,a)\right) \\
		\end{aligned}
	\end{displaymath}
	Adding $\left(-\E_{s,a\sim d}[Q^{k+1}(s,a)]+ \E_{s,a\sim\rho}[Q^{k+1}(s,a)]\right)$ to both sides we get
	\begin{displaymath}
		\begin{aligned}
			  & \E_{s,a\sim d}[\hat Q^{k+1}(s,a)] - \E_{s,a\sim \rho}[\hat Q^{k+1}(s,a)]-\E_{s,a\sim d}[Q^{k+1}(s,a)]+ \E_{s,a\sim\rho}[Q^{k+1}(s,a)]                               \\
			= & (d-\rho)^T(\mB^\pi \hat Q^k) +\beta(\chi_1+\chi_2) + (d-\rho)^T\Delta(s,a) - (d-\rho)^T \mB^\pi Q^k                                                                 \\
			  & - C_{\var}(d)\left(d^T(\mB^\pi \hat Q^k) +\beta\chi_2 + d^T\Delta(s,a)\right)+ C_{\var}(\rho)\left(\rho^T(\mB^\pi \hat Q^k) +\beta\chi_1 + \rho^T\Delta(s,a)\right) \\
			= & \beta((1+C_\var(\rho))\chi_1+(1-C_{\var}(d))\chi_2) + \left((1-C_{\var}(d))d-(1-C_\var(\rho))\rho\right)^T \Delta(s,a)                                              \\
			  & + (d-\rho)^T\left(\mB^\pi \hat Q^k-\mB^\pi Q^k \right)+ (C_{\var}(\rho)\rho-C_{\var}(d)d)^T(\mB^\pi \hat Q^k)
		\end{aligned}
	\end{displaymath}
	In order that LHS $\ge 0$, we require $\beta$ to satisfy,
	\begin{displaymath}
		\begin{aligned}
			\beta \ge & \frac{(d-\rho)^T\left(\mB^\pi (\hat Q^k-Q^k) \right)+ (C_{\var}(\rho)\rho-C_{\var}(d)d)^T(\mB^\pi \hat Q^k)}{(1+C_\var(\rho))\chi_1+(1-C_{\var}(d))\chi_2} \\
			          & + \frac{\left((1-C_{\var}(d))d-(1-C_\var(\rho))\rho\right)^T \Delta(s,a)}{(1+C_\var(\rho))\chi_1+(1-C_{\var}(d))\chi_2}                                    \\
		\end{aligned}
	\end{displaymath}
	thus giving the desired result.

	Because of Assumption \ref{ass:bound-q}, we can bound $\Delta(\cdot,\cdot)$ with $(1-f)\left[|r-r_{\widehat \mM}|
		+\frac {2\gamma R_{\max}}{1-\gamma}\TV(T,\hat T)\right] + \frac{fC_{r,T,\delta}R_{\max}}{(1-\gamma)\sqrt{|\mD|}}$
\end{proof}

\begin{lemma}\label{bound-c}
	\textit{Let $\lambda_{\alpha,\pi,f}(s,a)$ be defined as in Lemma \ref{q-update-lemma}, given marginal distribution $\rho(s,a) =\rho(s)\pi(a|s)$, then
		\begin{displaymath}
			\begin{aligned}
				    & \E_{s,a\sim\rho}\left[\frac{\lambda_{\alpha,\pi,f}(s,a)}{\COne}\right]                                                          \\
				\le & \alpha \left(1-\|\pi\|^2_{L^2}+f\TV(\pi,\pi^\beta)\right)\sqrt{\frac{\kappa_\var(\pi)|\mS|(\CQL(\rho,d^{\pi^\beta})+1)}{|\mD|}}
			\end{aligned}
		\end{displaymath}
	}
\end{lemma}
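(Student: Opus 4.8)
The plan is to reduce the claimed bound to three essentially independent estimates: a pointwise lower bound on the variance, an action-space sum that produces the $1-\|\pi\|_{L^2}^2+f\TV$ factor, and a Cauchy--Schwarz estimate over the state space that produces the $\sqrt{|\mS|(\CQL+1)}$ factor. First I would substitute $\lambda_{\alpha,\pi,f}(\ms,\ma)=\alpha(1-\pi^f(\ma|\ms))$ into the left-hand side and write the expectation as $\alpha\sum_\ms \rho(\ms)\sum_\ma \pi(\ma|\ms)(1-\pi^f(\ma|\ms))/\COne$. Since the denominator $\COne=\sqrt{\var_{\pi^f}(Q(\ms,\cdot))|\mD(\ms)|}$ depends on $\ms$ only, I would pull it outside the action sum, apply Assumption \ref{bound-var} to get $\var_{\pi^f}(Q(\ms,\cdot))^{-1/2}\le\sqrt{\kappa_\var(\pi)}$, and use the approximation $|\mD(\ms)|\approx|\mD|\,d^{\pi^\beta}(\ms)$ already invoked in Lemma \ref{q-update-lemma} to replace $|\mD(\ms)|^{-1/2}$ by $(|\mD|\,d^{\pi^\beta}(\ms))^{-1/2}$.

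Next I would evaluate the inner action sum. Using $\sum_\ma\pi(\ma|\ms)=1$ gives $\sum_\ma\pi(\ma|\ms)(1-\pi^f(\ma|\ms))=1-\sum_\ma\pi(\ma|\ms)\pi^f(\ma|\ms)$, and expanding $\pi^f=f\pi^\beta+(1-f)\pi$ turns this into $1-\|\pi(\cdot|\ms)\|_2^2+f\sum_\ma\pi(\ma|\ms)(\pi(\ma|\ms)-\pi^\beta(\ma|\ms))$. The key estimate here is that, because $\pi(\ma|\ms)\le 1$, restricting to the actions where $\pi>\pi^\beta$ yields $\sum_\ma\pi(\ma|\ms)(\pi(\ma|\ms)-\pi^\beta(\ma|\ms))\le\sum_{\ma:\pi>\pi^\beta}(\pi(\ma|\ms)-\pi^\beta(\ma|\ms))=\TV(\pi(\cdot|\ms),\pi^\beta(\cdot|\ms))$, so the inner sum is bounded by the nonnegative per-state quantity $w(\ms):=1-\|\pi(\cdot|\ms)\|_2^2+f\TV(\pi(\cdot|\ms),\pi^\beta(\cdot|\ms))$.

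Finally I would collect the surviving state sum $\sum_\ms \rho(\ms)\,d^{\pi^\beta}(\ms)^{-1/2}\,w(\ms)$. Bounding $w(\ms)$ by its supremum over $\ms$, which I identify with the state-independent quantity $1-\|\pi\|_{L^2}^2+f\TV(\pi,\pi^\beta)$ of the statement, I would pull it out and apply Cauchy--Schwarz to the remainder, $\sum_\ms \rho(\ms)/\sqrt{d^{\pi^\beta}(\ms)}\le\sqrt{|\mS|}\,(\sum_\ms\rho(\ms)^2/d^{\pi^\beta}(\ms))^{1/2}$, recognizing $\sum_\ms\rho(\ms)^2/d^{\pi^\beta}(\ms)=\chis{\rho}{d^{\pi^\beta}}+1=\CQL(\rho,d^{\pi^\beta})+1$. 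Multiplying the collected factors $\alpha$, $\sqrt{\kappa_\var(\pi)/|\mD|}$, and $\sqrt{|\mS|(\CQL(\rho,d^{\pi^\beta})+1)}$ against $w$ then gives the claim. I expect the main obstacle to be this last step: the per-state factor $w(\ms)$ is genuinely state-dependent, so replacing it by the single scalar $1-\|\pi\|_{L^2}^2+f\TV(\pi,\pi^\beta)$ forces a commitment to a worst-case (sup/inf over $\ms$) reading of those norms, or else keeping $w(\ms)$ inside a more careful joint Cauchy--Schwarz; making that bookkeeping consistent with the conventions used for $\|\pi\|_{L^2}$ and $\TV(\pi,\pi^\beta)$ elsewhere in the paper is the delicate point.
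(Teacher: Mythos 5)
Your proposal follows essentially the same route as the paper's own proof: the same factorization of $\E_{s,a\sim\rho}[\lambda_{\alpha,\pi,f}/\COne]$ into a state sum and an action sum, the same use of Assumption \ref{bound-var} and $|\mD(\ms)|\approx|\mD|\,d^{\pi^\beta}(\ms)$, the same expansion of $\pi^f$ with the $\pi\le 1$ truncation to get $1-\|\pi\|_{L^2}^2+f\TV(\pi,\pi^\beta)$, and the same Cauchy--Schwarz step giving $\sum_\ms\rho(\ms)/\sqrt{d^{\pi^\beta}(\ms)}\le\sqrt{|\mS|(\CQL(\rho,d^{\pi^\beta})+1)}$. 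The bookkeeping subtlety you flag (replacing the state-dependent factor $w(\ms)$ by a single scalar) is likewise glossed over in the paper, which silently drops the state dependence of $\|\pi(\cdot|\ms)\|_{L^2}^2$ and $\TV$ in its Equation \ref{bound-pi-lem}, so your reading is if anything more careful.
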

\begin{proof}
	\begin{displaymath}
		\begin{aligned}
			 & = \alpha\sum_{s,a} \frac{\rho(s)\pi(a|s)(1-\pi^f(a|s))}{\COne}                                                                  \\
			 & = \frac{\alpha}{\sqrt{|\mD|}}\sum_s\frac{\rho(s)}{\sqrt{\var_{\pi^f}(Q(s,\cdot))d^{\pi^\beta}(s)}}\sum_a \pi(a|s)(1-\pi^f(a|s))
		\end{aligned}
	\end{displaymath}
	Let $\kappa(s) := \frac{\rho(s)}{\sqrt{d^{\pi^\beta}(s)}}$. Then we can write:
	\begin{equation}
		\label{bound-d-lem}
		\begin{aligned}
			\CQL(\rho,d^{\pi^\beta})   & = \sum_s \frac{\rho(s)^2}{d^{\pi^\beta}(s)} - 1                           \\
			\CQL(\rho,d^{\pi^\beta})+1 & = \sum_s \kappa(s)^2                                                      \\
			\CQL(\rho,d^{\pi^\beta})+1 & \le \left(\sum_s \kappa(s)\right)^2 \le |\mS|(\CQL(\rho,d^{\pi^\beta})+1)
		\end{aligned}
	\end{equation}
	Similarly, the summanded-over-$a$ part can be bounded as well.
	\begin{equation}
		\label{bound-pi-lem}
		\begin{aligned}
			\sum_a \pi(a|s)(1-\pi^f(a|s)) & = \sum_a \pi(a|s)\left[1-\pi(a|s)+f(\pi(a|s)-\pi^\beta(a|s))\right] \\
			                              & \le f\TV(\pi,\pi^\beta)+\sum_a \pi(a|s)(1-\pi(a|s))                 \\
			                              & = 1-\|\pi(a|s)\|^2_{L^2}+f\TV(\pi,\pi^\beta)
		\end{aligned}
	\end{equation}
	Combining Equation \ref{bound-pi-lem}, Equation \ref{bound-d-lem} and Assumption \ref{bound-var} gives the claim.
\end{proof}

\subsection{Missing Proofs in Sections \ref{sec:dromo-lin} and \ref{sec:dromo-nonlin}}
\begin{proof}[Proof of Theorem \ref{linear-theo}]
	We first set the gradient to zero and substituting $Q(s,a)=\omega^T \mat F(s,a)$,
	\begin{displaymath}
		\begin{aligned}
			0 & = \sum_{s,a}d_f(s,a)\left(\hat Q(s,a)-\hat \mB^\pi \hat Q^k(s,a)\right)\mat F(s,a)                                                                                                \\
			  & + \beta^k \sum_{s,a}(\rho(s,a)-d(s,a)) \mat F(s,a)+ \alpha^k \sqrt{\frac1n}\frac{\E_{d_f}\left[(\mat F-\E_{d_f}[\mat{F}])(\mat F\omega^k-\E_{d_f}[\mat F\omega^k])\right]}{\CThr} \\
			  & = \sum_{s,a}d_f(s,a)\left(\hat Q(s,a)-\hat\mB^\pi \hat Q^k(s,a)\right)\mat F(s,a)                                                                                                 \\
			  & + \beta^k \sum_{s,a}(\rho(s,a)-d(s,a)) \mat F(s,a)+ \alpha^k \sqrt{\frac1n}\frac{\cov(\mat{F},\mat{F}\omega^k)}{\CThr}
		\end{aligned}
	\end{displaymath}

	By simple algebraic manipulation, and substituting $\mat U = diag(\rho(s,a))$,$\mat V = diag(d(s,a))$, and $\mat{D_f}=diag(d^f(s,a))$ we have that

	\begin{displaymath}
		\begin{aligned}
			\mat{F}^T\mat{D_f F}\omega^{k+1} & =\mat{F}^T\mat{D_f}\left(\mB^\pi \hat Q^k\right)-\beta^k\mat{F}^T \mat{(U-V)}-\alpha^k n^{-\frac 1 2}\frac{\cov_{d_f}(\mat{F,F\omega}^k)}{\CThr}+\mat{F}^T\mat{D_f}\Delta
		\end{aligned}
	\end{displaymath}
	In order to show similar property in Corollory \ref{corollary-smaller-bigger}, we take the expectation under $\mu(s)\cdot\pi(a|s)$, we have the following. Specifically, we need to reason with the terms $C_{var}$ and $(\star)$,
	\begin{equation}
		\begin{aligned}
			\E_{\mu(s),\pi(a|s)}[\hat Q^{k+1}(s,a)] & :=\left(\mu\cdot\pi\right)^T\mat{F}\omega^{k+1}                                                                                                                                                             \\
			                                        & = \underbrace{\left(\mu\cdot\pi\right)^T\mat{F}\left(\mat F^T \mat {D_fF}\right)^{-1}\mat F^T\mat{D_f}\left(\hat\mB^\pi \hat Q^k\right)}_{\text{LSTD-Q (\cite{lagoudakisLeastSquaresPolicyIteration2003})}} \\
			                                        & -\beta^k\underbrace{\left(\mu\cdot\pi\right)^T\mat{F}\left(\mat F^T \mat {D_fF}\right)^{-1}\mat F^T\left(\rho-d\right)}_{=:(\star)}                                                                         \\
			                                        & -\alpha^k \underbrace{n^{-\frac 1 2}\left(\mu\cdot\pi\right)^T\mat{F}\left(\mat F^T \mat {D_fF}\right)^{-1}\frac{\cov_{d_f}(\mat{F,F\omega}^k)}{\CThr}}_{=:C_\var\ge0}                                      \\
		\end{aligned}
	\end{equation}
	The $C_{var}$ term is greater than $0$ by definition. Also, we expect the stared part to be greater than zero, and indeed we can show
	\begin{displaymath}
		\begin{aligned}
			(\star) & = \beta^k\left(\mu\cdot\pi\right)^T\mat{F}\left(\mat F^T \mat {D_fF}\right)^{-1}\mat F^T\left(\rho-d\right) \\
			        & = \beta^k \left(\mu\cdot\pi\right)^T \mat{D_f}^{-1}(\rho-d)                                                 \\
			        & = \beta^k \sum_{s,a} \mu(s)\pi(a|s)\frac{\rho(s,a)-d(s,a)}{d_f(s,a)}
			\\
			        & = \beta^k \sum_{s,a}(1-\gamma P^\pi_{\widehat \mM})\rho(s,a)\frac{\rho(s,a)-d(s,a)}{d_f(s,a)}
		\end{aligned}
	\end{displaymath}
	where $P^\pi_{\widehat \mM} = \hat T \odot \pi$. The fourth inequality is because $\rho(s,a) = (\mu\cdot\pi)^T(1-\gamma P^\pi)^{-1}(s,a)$. Since $1-\gamma P^\pi_{\widehat \mM}>0$ and, by Lemma 1 of \cite{yuCOMBOConservativeOffline2021}, $\sum_{s,a}\rho(s,a)\frac{\rho(s,a)-d(s,a)}{d_f(s,a)}>0$, we can conclude that $(\star)>0$.
	Thus we can adjust
	\begin{displaymath}
		\begin{aligned}
			 & \E_{\mu(s),\pi(a|s)}[\hat Q^{k+1}(s,a)]                                                                                                                                            \\
			 & := \E_{\mu(s),\pi(a|s)}[\hat Q^{k+1}_{\lstd}(s,a)]                                                                                                                                 \\
			 & -\alpha^k C_{\var} - \beta^k (\star)                                                                                                                                               \\
			 & = \E_{\mu(s),\pi(a|s)}[Q^{k+1}(s,a)]                                                                                                                                               \\
			 & \underbrace{+\E_{\mu(s),\pi(a|s)}[\hat Q^{k+1}_{\lstd}(s,a)-Q^{k+1}(s,a)] + \E_{\mu(s),\pi(a|s)}[\Delta(s,a)]-\alpha^k C_{\var} - \beta^k (\star)}_{\text{make smaller than zero}} \\
		\end{aligned}
	\end{displaymath}
	This implies that $\hat \mJ^{k+1}(\pi) \le \mJ^{k+1}_{\text{LSTD}}(\pi)$ given the same policy $\pi$.
	In order that bracketed part is smaller than zero, we can let $\beta^k$ satisfy,
	\begin{displaymath}
		\begin{aligned}
			(\star)\beta^k & \ge \E_{\mu(s),\pi(a|s)}[\hat Q^{k+1}_{\lstd}(s,a)-Q^{k+1}(s,a)] + \E_{\mu(s),\pi(a|s)}[\Delta(s,a)]                                                                                                                                                                        \\
			\beta^k        & \ge \frac{\E_{\mu(s),\pi(a|s)}[\hat Q^{k+1}_{\lstd}(s,a)-Q^{k+1}(s,a)] + \E_{\mu(s),\pi(a|s)}[\Delta(s,a)]-C_\var}{(\star)}                                                                                                                                                 \\
			               & = \frac{(\mu\cdot\pi)^T\left[\mat{F}\left(\mat F^T\mat{D_fF}\right)^{-1}\mat{F}^T \left(\hat \mB^\pi \hat Q^k\right)-\left(\mB^\pi \hat Q^k\right)\right]-C_\var}{\left(\mu\cdot\pi\right)^T\mat{F}\left(\mat F^T \mat {D_fF}\right)^{-1}\mat F^T(\rho-d)}                  \\
			               & = \frac{(\mu\cdot\pi)^T\left[\mat{F}\left(\mat F^T\mat{D_fF}\right)^{-1}\mat{F}^T \left(\hat \mB^\pi \hat Q^k\right)-\left(\hat \mB^\pi \hat Q^k\right)+\Delta(s,a)\right]-C_\var}{\left(\mu\cdot\pi\right)^T\mat{F}\left(\mat F^T \mat {D_fF}\right)^{-1}\mat F^T(\rho-d)} \\
		\end{aligned}
	\end{displaymath}
	Taking the upper bound of $\Delta(\cdot,\cdot)=f(\hat \mB^\pi_{\widetilde \mM} \hat Q - \mB^\pi \hat Q)(s,a)
		+ (1-f)(\hat \mB^k_{\widehat \mM} \hat Q - \mB^\pi \hat Q)(s,a)$ as stated previously in Assumption \ref{ass:bound-q} gives the final claim.
\end{proof}

\begin{proof}[Proof of Theorem \ref{non-linear-theo}]
	similar to the result of non-linear version of CQL \cite{kumarConservativeQLearningOffline2020}. Our proof leverages the neural tangent kernel assumption \cite{jacotNeuralTangentKernel2020} that enables us to reduce the problem to a linear setup, where we demonstrated our result in theorem \ref{linear-theo}. first we express $\psi^{k+1}$ obtained by a one-step gradient update under equation \ref{eq:dr-combo-alt} with step size $\eta$:
	\begin{displaymath}
		\begin{aligned}
			\psi^{k+1} = & \psi^k - \eta \beta^k \left(\E_{\rho(s,a)}\left[\gQ(s,a)\right]-\E_{d(s,a)}\left[\gQ(s,a)\right]\right)                                        \\
			             & - \eta \E_{d_f(s,a)}\left[\left(\hat Q^k - \hat \mB \hat Q^k\right)\cdot\gQ(s,a)\right]                                                        \\
			             & - \eta \alpha^k \E_{d_f(s,a)}\left[\frac{(\gQ(s,a)-\E_{d_f(s,a)}[\gQ(s,a)])(\hat Q^k(s,a)-\E_{d_f(s,a)}[\hat Q^k(s,a)])}{\sqrt{n}\CTwo}\right] \\
			=            & \psi^k - \eta \beta^k \left(\E_{\rho(s,a)}\left[\gQ(s,a)\right]-\E_{d(s,a)}\left[\gQ(s,a)\right]\right)                                        \\
			             & - \eta \E_{d_f(s,a)}\left[\left(\hat Q^k - \mB \hat Q^k\right)\cdot\gQ(s,a)\right]                                                             \\
			             & - \eta \alpha^k n^{-\frac 1 2}\frac{\cov(\gQ,\hat Q^k)}{\CTwo}\E_{d_f(s)}
			\\
			             & + \eta \E_{d_f(s,a)}\left[\Delta(s,a)\cdot\gQ(s,a)\right]
		\end{aligned}
	\end{displaymath}
	where $\Delta(\cdot,\cdot)$ is given by $\Delta(s,a) :
		= f(\hat \mB^\pi_{\widetilde \mM} \hat Q - \mB^\pi \hat Q)(s,a)
		+ (1-f)(\hat \mB^k_{\widehat \mM} \hat Q - \mB^\pi \hat Q)(s,a)$.

	Under small learning rates, i.e., $g=\mO(1)$, as has been shown previously on works on neural tangent kernel towards explaining over-parameterized one-hidden-layer neural network (\cite{jacotNeuralTangentKernel2020}), we can taylor-expand the Q-update as follows:
	\begin{displaymath}
		\begin{aligned}
			 & \hat Q^{k+1}(s,a)                                                                                                           \\
			 & \approx \hat Q^k(s,a) + (\psi^{k+1}-\psi^k)^T \gQ(s,a)                                                                      \\
			 & = \hat Q^k(s,a)                                                                                                             \\
			 & -\eta \beta^k \left(\E_{\rho(s',a')}\left[\gQ(s',a')^T\gQ(s,a)\right]-\E_{d(s',a')}\left[\gQ(s',a')^T\gQ(s,a)\right]\right) \\
			 & - \eta \E_{d_f(s',a')}\left[\left(\hat Q^k - \mB \hat Q^k\right)\cdot\gQ(s',a')^T\gQ(s,a)\right]                            \\
			 & - \eta \alpha^k \sqrt{1/n} \frac{\cov(\gQ,\hat Q^k)}{\CTwo}\gQ(s,a)                                                         \\
			 & + \eta \E_{d_f(s',a')}\left[\Delta(s,a)\cdot\gQ(s',a')^T\gQ(s,a)\right]
		\end{aligned}
	\end{displaymath}
	For simplicity, denote $\mat K^k = \left(\gQ\right)^T\gQ$ as the NTK kernel matrix of the Q-function at iteration $k$. Then $Q^{k+1}$ is given by

	\begin{displaymath}
		\begin{aligned}
			\hat Q^{k+1} & = \hat Q^{k}-\eta \beta^k \mat K^k\left(\mat{U-V}\right)
			+\eta\mat {K^k D_f}\left(\mB^\pi \hat Q^k-\hat Q^k\right)-\eta\alpha^k\sqrt{1/n}\frac{\left(\mat{D_f - D_f^{\odot 2}}\right)\mat K^k \hat Q^k}{\CTwo} \\
			             & + \eta \mat {K^k D_f}\Delta.
		\end{aligned}
	\end{displaymath}
	\begin{equation}
		\begin{aligned}
			 & (\mu(s)\cdot\pi(a|s))^T\hat Q^{k+1}                                                                                                                                                                                    \\
			 & = \underbrace{(\mu(s)\cdot\pi(a|s))^T\hat Q^{k}+\eta(\mu(s)\cdot\pi(a|s))^T\mat {K^k D_f}\left(\mB^\pi \hat Q^k-\hat Q^k\right)}_{\text{(1) the unpenalized value}}                                                    \\
			 & -\underbrace{\eta\alpha^k(\mu(s)\cdot\pi(a|s))^T\sqrt{1/n}\frac{\left(\mat{D_f - D_f^{\odot 2}}\right)\mat K^k \hat Q^k}{\CTwo}-\eta \beta^k (\mu(s)\cdot\pi(a))^T\mat K^k\left(\mat{U-V}\right)}_{\text{(2) penalty}}
			\\
			 & + \underbrace{\eta(\mu(s)\cdot\pi(a|s))^T \mat {K^k D_f}\Delta}_{\text{(3) over-estimation}}.
		\end{aligned}
		\label{eq:non-lin-q}
	\end{equation}

	In order that the LHS is indeed a lower bound, we require that $-(2)+(3)<0$. It suffices to have $\alpha$ and $\beta$ satisfies
	\begin{equation}
		\begin{aligned}
			\beta & \ge \mat{(U-V)^{-1}}\mat D_f	\max_{s,a}\Delta(s,a) - \eta\alpha^k(\mu(s)\cdot\pi(a|s))^T\sqrt{1/n}\frac{\left(\mat{D_f - D_f^{\odot 2}}\right)\mat K^k \hat Q^k}{\CTwo}
		\end{aligned}
	\end{equation}

	similar to the proof of gap-expanding property (see Theorem \ref{thrm:gap-expanding}) above, we can bound $\Delta(\cdot,\cdot)$ with $(1-f)\left[|r-r_{\widehat \mM}|
		+\frac {2\gamma R_{\max}}{1-\gamma}\TV(T,\hat T)\right] + \frac{fC_{r,T,\delta}R_{\max}}{(1-\gamma)\sqrt{|\mD|}}$

\end{proof}

\end{document}